\newtheorem{proposition}{Proposition}
\newtheorem{definition}{Definition}
\newtheorem{proof}{Proof}
\newtheorem{example}{Example}
\soulregister{\cite}{1}
\definecolor{accessblue}{RGB}{0,105,154}
\def\ps@IEEEtitlepagestyle{%
  \def\@oddfoot{\mycopyrightnotice}%
  \def\@oddhead{\hbox{}\@IEEEheaderstyle\leftmark\hfil\thepage}\relax
  \def\@evenhead{\@IEEEheaderstyle\thepage\hfil\leftmark\hbox{}}\relax
  \def\@evenfoot{}%
}
\def\mycopyrightnotice{%
  \begin{minipage}{\textwidth}
  \centering \scriptsize
 \textcopyright~2025 IEEE. Personal use of this material is permitted. Permission from IEEE must be obtained for all other uses, in any current or future media, including reprinting/republishing this material for advertising or promotional purposes, creating new collective works, for resale or redistribution to servers or lists, or reuse of any copyrighted component of this work in other works. \\
Published in: IEEE Access, vol. 13, pp. 131823-131838, 2025. DOI: 10.1109/ACCESS.2025.3592104.
  \end{minipage}
}
\def\BibTeX{{\rm B\kern-.05em{\sc i\kern-.025em b}\kern-.08em
    T\kern-.1667em\lower.7ex\hbox{E}\kern-.125emX}}
\begin{document}
\title{Hypergraph Neural Sheaf Diffusion: A Symmetric Simplicial Set Framework for Higher-Order Learning}

\author{
    Seongjin Choi\textsuperscript{1},
    Gahee Kim\textsuperscript{2},
    Yong-Geun Oh\textsuperscript{1*}\thanks{*Corresponding author: yongoh1@postech.ac.kr}%
    \thanks{Seongjin Choi and Yong-Geun Oh were supported by the IBS project \# IBS-R003-D1}%
    \\
    \IEEEauthorblockA{\textsuperscript{1}POSTECH, Gyeongbuk, Korea \& Center for Geometry and Physics, Institute for Basic Science (IBS), 79 Jigok-ro 127beon-gil, Nam-gu, Pohang, Gyeongbuk, KOREA 37673}
    \\
    \IEEEauthorblockA{\textsuperscript{2}Kim Jaechul Graduate School of AI, Korea Advanced Institute of Science and Technology (KAIST), Seoul 02455, South Korea}
    \\
    \IEEEauthorblockA{\textsuperscript{1*}Center for Geometry and Physics, Institute for Basic Science (IBS),
 79 Jigok-ro 127beon-gil, Nam-gu, Pohang, Gyeongbuk, KOREA 37673 \& POSTECH, Gyeongbuk, Korea}
}
\maketitle

\begin{abstract}
The absence of intrinsic adjacency relations and orientation systems in hypergraphs creates fundamental challenges for constructing sheaf Laplacians of arbitrary degrees. We resolve these limitations through symmetric simplicial sets derived directly from hypergraphs, called symmetric simplicial lifting, which encode all possible oriented subrelations within each hyperedge as ordered tuples. This construction canonically defines adjacency via facet maps while inherently preserving hyperedge provenance. We establish that the normalized degree zero sheaf Laplacian on our symmetric simplicial lifting reduces exactly to the traditional graph normalized sheaf Laplacian when restricted to graphs, validating its mathematical consistency with prior graph-based sheaf theory. Furthermore, the induced structure preserves all structural information from the original hypergraph, ensuring that every multi-way relational detail is faithfully retained. Leveraging this framework, we introduce Hypergraph Neural Sheaf Diffusion (HNSD), the first principled extension of neural sheaf diffusion to hypergraphs. HNSD operates via normalized degree zero sheaf Laplacian over symmetric simplicial lifting, resolving orientation ambiguity and adjacency sparsity inherent to hypergraph learning. Experimental evaluations demonstrate HNSD’s competitive performance across established benchmarks.
\end{abstract}

\begin{IEEEkeywords}
Cellular sheaf theory, hypergraph Laplacian, hypergraph neural networks, sheaf Laplacian, symmetric simplicial sets 
\end{IEEEkeywords}

\maketitle
\section{Introduction}
\label{sec:introduction}

Hypergraphs are mathematical structures that effectively represent higher-order relationships among entities through hyperedges, each of which may connect more than two nodes simultaneously\cite{bretto2013hypergraph}. As many real-world datasets naturally form hypergraphs\cite{gao20123, yan2020learning, wang2022multitask}, constructing effective neural network architectures on hypergraphs has become crucial for learning from such complex data\cite{feng2019hypergraph, gao2020hypergraph, wu2022hypergraph}. A foundational approach for relational data involves designing neural architectures around Laplacian operators, which encode relational structures into learnable diffusion processes. 

Graphs—viewed as 1-dimensional hypergraphs where hyperedges connect exactly two nodes—provide a natural foundation for Laplacian-based architectures. Graphs satisfy two critical structural conditions: (1) adjacency between nodes is canonically defined through pairwise edges, and (2) no total ordering of nodes is required to define the Laplacian. This structural simplicity allows graph Laplacians to encode relational patterns effectively, forming the basis for graph convolutional networks (GCNs)\cite{kipf2017semi, defferrard2016convolutional}, which have shown significant success, especially for homophilic graph datasets\cite{xia2021graph, zhang2020deep, wu2020comprehensive, yuan2022explainability}. Cellular sheaf theory~\cite{curry2014sheaves} enhances this framework by equipping graphs with additional algebraic-topological structure, assigning vector spaces to nodes and edges, and defining consistency constraints between them. This leads to the degree 0 sheaf Laplacian~\cite{hansen2019toward}, a generalization of the standard graph Laplacian, which underpins the development of neural sheaf diffusion (NSD) models on graphs~\cite{hansen2020laplacians, bodnar2022neural, barbero2022sheaf}. Unlike traditional GCNs, NSD leverages the expressive power of cellular sheaves to address two major limitations: poor performance on heterophilic graphs and the oversmoothing problem, where node representations become indistinguishable in deep architectures.

However, hypergraphs pose inherent challenges due to their structural complexity, making it extremely difficult to define sheaf Laplacians of various degrees as is done for graphs. In graphs, which model only 1-dimensional relations, the degree 0 sheaf Laplacian suffices to capture all connectivity patterns. In contrast, hypergraphs naturally encode high-dimensional relations, necessitating the definition of sheaf Laplacians of arbitrary degrees to represent their structure fully. Achieving this requires a well-defined notion of adjacency between hyperedges. Yet, by design, hypergraphs lack explicit adjacency among hyperedges-even with many hyperedges present, their multi-node nature often leads to sparse or undefined adjacency patterns. This sparsity can trivialize Laplacian operators, rendering them ineffective for learning tasks. These limitations highlight a critical gap: while Laplacian-based architectures are central to relational learning, their direct application to hypergraphs remains hindered by structural mismatches, motivating novel approaches to reconcile hypergraph complexity with effective operator design.
\begin{figure*}
    \centering
    \begin{subfigure}[b]{0.3\textwidth}
        \centering
        \begin{tikzpicture}
        [
            he/.style={draw, ellipse, dotted, inner sep=10pt},        
            node/.style={circle, fill=black, inner sep=0pt, minimum size=1.2mm, label distance=2mm} 
        ]
        \node[node, label=above:{\footnotesize $v_0$}] (v0) at (-1.5,0.75) {};
        \node[node, label=below:{\footnotesize $v_1$}] (v1) at (-1.5,-0.1875) {};
        \node[node, label=below:{\footnotesize $v_2$}] (v2) at (0,0.1875) {};
        \node[node, label=above:{\footnotesize $v_3$}] (v3) at (0,1.5) {};
        \node[node, label=right:{\footnotesize $v_4$}] (v4) at (1.5,0.9375) {};
        \node[draw=none, fill=none] (g1) at (0.9375,-0.1875) {}; 
        \node[draw=none, fill=none] (g2) at (0,0.84) {}; 

        \node[label=left:{\footnotesize $e$}] [he, fit=(v0) (v1) (v2) (v3)] {};
        \node[label=right:{\footnotesize $e'$}] [he, fit=(v2) (v3) (v4) (g1)] {};
        \end{tikzpicture}
       \caption{Hypergraph.}
    \end{subfigure}
    \hspace{0.02\textwidth}%
    \begin{subfigure}[b]{0.3\textwidth}
        \centering
        \begin{tikzpicture}
        [
            he/.style={draw, ellipse, dotted, inner sep=10pt},        
            node/.style={circle, fill=black, inner sep=0pt, minimum size=1.2mm, label distance=2mm} 
        ]
        \node[node, label=left:{\footnotesize $v_0$}] (w0) at (4.5,0.75) {};
        \node[node, label=below:{\footnotesize $v_1$}] (w1) at (4.5,-0.1875) {};
        \node[node, label=below:{\footnotesize $v_2$}] (w2) at (6,0.1875) {};
        \node[node, label=above:{\footnotesize $v_3$}] (w3) at (6,1.5) {};
        \node[node, label=right:{\footnotesize $v_4$}] (w4) at (7.5,0.9375) {};
        \node[draw=none, fill=none] (a1) at (6.9375,-0.1875) {}; 
        \node[draw=none, fill=none] (a2) at (4.875,-0.09375) {};
        \node[draw=none, fill=none] (a3) at (5.625,0.09375) {};
        \fill[gray!20] (w0.center) -- (w1.center) -- (w2.center) -- (w3.center) -- cycle;
        \fill[gray!20] (w2.center) -- (w3.center) -- (w4.center) -- cycle;
        \node[he, fit=(w0) (w1) (w2) (w3)] {};
        \node[he, fit=(w2) (w3) (w4) (a1)] {};
 
        \draw (w0) -- (w1) -- (w2) -- (w3) -- (w0) -- cycle;
        \draw[dotted] (w0) -- (w2);
        \draw (w1) -- (a2) -- cycle;
        \draw (w1) -- (w3);
        \draw (w2) -- (w3) -- (w4) -- (w2) -- cycle;
        \end{tikzpicture}
        \caption{Induced simplicial complex.}
        \label{introinducedsimplicialcomplex}
    \end{subfigure}
    \begin{subfigure}[b]{0.3\textwidth}
        \centering
        \begin{tikzpicture}
         [
            he/.style={draw, ellipse, dotted, inner sep=10pt},        
            node/.style={circle, fill=black, inner sep=0pt, minimum size=1.2mm, label distance=2mm} 
        ]
        \node[node, label=left:{\footnotesize $v_0$}] (w0) at (4.5,0.75) {};
        \node[node, label=below:{\footnotesize $v_1$}] (w1) at (4.5,-0.1875) {};
        \node[node, label=below:{\footnotesize $v_2$}] (w2) at (6,0.1875) {};
        \node[node, label=above:{\footnotesize $v_3$}] (w3) at (6,1.5) {};
        \node[node, label=right:{\footnotesize $v_4$}] (w4) at (7.5,0.9375) {};
        \node[draw=none, fill=none] (a1) at (6.9375,-0.1875) {}; 
        \node[draw=none, fill=none] (a2) at (4.875,-0.09375) {};
        \node[draw=none, fill=none] (a3) at (5.625,0.09375) {};
        \node[he, fit=(w0) (w1) (w2) (w3)] {};
        \node[he, fit=(w2) (w3) (w4) (a1)] {};
        \fill[gray!20] (w0.center) -- (w1.center) -- (w2.center) to[bend left] (w3.center) -- cycle;
        \fill[gray!20] (w2.center) to[bend right] (w3.center) -- (w4.center) -- cycle;
        \draw[-latex] (w0) -- (w1);
        \draw[-latex] (w1) -- (w2);
        \draw[-latex] (w0) -- (w3);
        \draw[-latex, dotted] (w0) -- (w2);
        \draw (w1) -- (a2) -- cycle;
        \draw (w1) -- (w3);
        \draw[-latex] (w3) -- (w4);
        \draw[-latex] (w2) -- (w4);
        \draw[-latex, bend left] (w2) to node[midway, left, xshift=0.1cm] {} (w3);
        \draw[-latex, bend right] (w2) to node[midway, right, xshift=-0.1cm] {} (w3);

        \end{tikzpicture}
        \caption{Symmetric simplicial lifting.}
        \label{introinducedsymsset}
    \end{subfigure}
    \caption{(a) Hypergraph lacks hierarchical structure. (b) The induced simplicial complex loses contextual information. (c) Symmetric simplicial lifting preserves hyperedge-specific context.}
\end{figure*}

Given the structural challenges of hypergraphs, a natural strategy to define sheaf Laplacians involves embedding hypergraphs into simplicial complexes \cite{landry2024simpliciality} by adjoining all subsets of each hyperedge as simplices (Fig.~\ref{introinducedsimplicialcomplex}). This approach interprets simplex as a possible subrelation of each hyperedge represented by a set. The induced structure inherently supports adjacency: two simplices are adjacent if they share a common facet or cofacet, creating a dense network of relationships \cite{yang2022simplicial, wei2025persistent, russold2022persistent}. Combined with the ability to induce orientations via a total node order, simplicial complexes initially appear well-suited for defining sheaf Laplacians of various degrees. However, this strategy introduces two critical limitations. First, hyperedge-specific structural information is lost: $\{v_2, v_3\}$ in Fig.~\ref{introinducedsimplicialcomplex} become indistinguishable whether it represents possible subrelations of hyperedge $e$, hyperedge $e'$, or independently observed hyperedge. This ambiguity erases contextual relationships between subrelations within hyperedge and the original hyperedges, making it impossible to reconstruct the original hypergraph; and (2) dependence on an arbitrary total node order to define orientations~\cite{bodnar2021weisfeilertop}.  

In this work, we address these limitations by introducing a symmetric simplicial set explicitly constructed from hypergraphs. Our framework collects all possible tuples in each hyperedge with its originating hyperedge (Fig.~\ref{introinducedsymsset}). These tuples, which we term simplices, encode all possible oriented subrelations derived from the observed hyperedges while preserving their hyperedge-specific context. The induced structure, called symmetric simplicial lifting, inherently defines adjacency via facet maps and supports sheaf Laplacians of arbitrary degrees without requiring auxiliary node orderings. Crucially, our construction generalizes traditional graph-based sheaf theory: the normalized degree 0 sheaf Laplacian on symmetric simplicial lifting reduces exactly to the traditional normalized sheaf Laplacian when restricted to the graph. This equivalence validates our framework as a principled extension of sheaf theory to hypergraphs. Leveraging this foundation, we develop Hypergraph Neural Sheaf Diffusion (HNSD), which implements NSD on hypergraphs via normalized degree 0 sheaf Laplacian over symmetric simplicial lifting. Our key contributions can be summarized as follows:
\begin{itemize}
    \item We introduce a symmetric simplicial set construction from a hypergraph, called symmetric simplicial lifting, enabling the formulation of high-degree sheaf Laplacians on a hypergraph.
    \item We demonstrate two fundamental properties of symmetric simplicial lifting: it preserves all structural information of the original hypergraph, and the normalized degree 0 sheaf Laplacian is equal to the traditional normalized sheaf Laplacian in the case of graphs.
    \item We introduce new architecture HNSD, the first extension of NSD to hypergraphs, implemented via the normalized degree 0 sheaf Laplacian on symmetric simplicial lifting. HNSD resolves fundamental orientation and adjacency ambiguities in higher-order learning while achieving competitive performance across established hypergraph benchmarks.
\end{itemize}

The remainder of this paper is structured as follows. Section~\ref{sec:relatedworks} reviews related work across key areas, including sheaf learning on graphs, advancements in hypergraph networks, and the representation of hypergraphs using symmetric simplicial sets. Section~\ref{sec:preliminaries} provides the essential background on sheaf Laplacians on graphs and the fundamental concepts of hypergraphs. Building on this, Section~\ref{sec:sheaflaplaciansonhighorderdomain} introduces the advanced concepts of ordered simplicial complexes and symmetric simplicial sets, establishing the formulation of sheaf Laplacians on these higher-order domains. In Section~\ref{sec:methodology}, we construct the symmetric simplicial lifting of a hypergraph, establish some of its properties, and then propose the HNSD framework. Finally, experimental evaluations on real-world datasets are presented in Section~\ref{sec:experiments}, and concluding remarks are offered in Section~\ref{sec:conclusion}.

\begin{table}[h!]
\centering
\caption{Summary of symbols used in the paper.}
\label{table:example}
\resizebox{0.8\columnwidth}{!}{
\begin{tabular}{lc} 
    \toprule
    \textbf{Notation} & \textbf{Description} \\ 
    \midrule
    $\mathbb{N}$ & the set of non-negative integers \\ 
    $[n]$ & $\{0, 1, \cdots, n\}$ for $n \in \mathbb{N}$ \\ 
    $\mathfrak{S}_n$ & all bijections from $[n]$ to $[n]$ \\
    $\Vert A \Vert$ & $|A| -1$ for a set $A$ \\ 
    $2^B$ & the set of all subsets of $B$ \\ 
    $B^A$ & the set of all functions from $A$ to $B$ \\
    $\operatorname{Id}$ & the identity map on any set to itself \\
    \bottomrule
\end{tabular}
}
\end{table}

\section{Related work}
\label{sec:relatedworks}

\subsection{Sheaf learning on graphs}
The graph Laplacian serves as the foundational tool for constructing neural network architectures~\cite{defferrard2016convolutional, kipf2017semi}. Despite their remarkable success in homophilic graph learning, GCNs face significant challenges in heterophilic graphs~\cite{zhu2020beyond} and suffer from oversmoothing with increasing network depth~\cite{nt2019revisiting,oonograph}. These limitations stem from the fundamental constraints of traditional graph Laplacian-based diffusion processes. Recent advances addressed these issues through geometric reinterpretations using cellular sheaf theory~\cite{curry2014sheaves}. The foundational implementation emerged with Sheaf Neural Networks~\cite{hansen2020sheaf}, which replace standard graph Laplacians with sheaf Laplacians to enable more sophisticated feature transport. Subsequent developments introduced learnable cellular sheaves~\cite{bodnar2022neural,barbero2022sheaf}, demonstrating that principled diffusion processes guided by sheaf geometry can simultaneously mitigate oversmoothing and improve performance on heterophilic graphs. 

\subsection{Hypergraph networks}
In hypergraph learning, hypergraph Laplacians derived from incidence matrices have been used to construct neural networks on hypergraphs. HGNN~\cite{feng2019hypergraph} generalizes graph convolution by introducing a hypergraph Laplacian constructed from the incidence matrix, enabling each node to update its features through shared structures with other nodes in its hyperedges. HyperGCN~\cite{yadati2019hypergcn} improved computational efficiency by approximating hypergraph convolution through pairwise edges. HNHN~\cite{dong2020hnhn} applied nonlinear transformations to both nodes and hyperedges, while Hyper-Conv and Hyper-Atten~\cite{bai2021hypergraph} incorporated attention mechanisms to dynamically weight hypergraph components.

Subsequent works have further enhanced the expressiveness and adaptability of hypergraph neural networks. HCoN~\cite{wu2022hypergraph} explored collaborative learning between nodes and hyperedges using a reconstruction-based framework. Learnable hypergraph structures were introduced in~\cite{zhang2022learnable}, enabling adaptive optimization of hyperedge connections. HGNN+~\cite{gao2022hgnn} further generalized HGNN by learning adaptive hyperedge weights to better capture their relative importance. HCNH~\cite{wu2022hypergraphconvolution} improved node classification by jointly filtering node and hyperedge features while minimizing reconstruction loss. CCL~\cite{wu2024collaborative} enhanced hypergraph node classification by contrasting GCN and Hyper-Conv views in each layer, enabling collaborative representation learning.

Collectively, these studies have advanced hypergraph neural networks from early spectral formulations to adaptive, attention-based, and contrastive learning frameworks. These incidence matrix-derived operators have also been extended through sheaf-theoretic frameworks, where node-level analysis is enriched via sheaf Laplacians that model feature transport through hyperedge-mediated interactions~\cite{duta2023sheaf}. While simplicial complexes impose additional structural constraints compared to general hypergraphs, recent work has demonstrated their value in neural architectures through orientation-aware diffusion. Specifically, degree $n$ sheaf Laplacians have been adapted to operate on simplicial complexes~\cite{wei2025persistent,russold2022persistent,yang2022simplicial}, governing how features at $n$-dimensional simplices diffuse across adjacent structures.

\subsection{Symmetric Simplicial Sets from Hypergraphs} 
A categorical framework provides a functorial mapping from ordered hypergraphs to their canonical simplicial set representations~\cite{spivak2009higher}. This approach was later extended to general hypergraphs via an explicit functorial construction that generates symmetric simplicial sets~\cite{choi2024cellular}, thereby providing a concrete categorical mechanism to study hypergraph data through homological algebra.

\section{Preliminaries}
\label{sec:preliminaries}
In this section, we review the sheaf Laplacian on graphs and notions of hypergraphs.

\subsection{Sheaf Laplacian on graphs}
A graph is a mathematical structure representing pairwise (1-dimensional) relations among entities. Formally, a graph $G$ is a triple $(V(G), E(G), f_G)$, where $V(G)$ is the set of nodes, $E(G)$ is the set of edges, and $f_G: E(G) \to 2^{V(G)}$ is the labeling function assigning each edge to a subset of exactly two nodes. A node $v$ is said to be incident to an edge $e$, denoted by $v \in e$, if and only if $v \in f_G(e)$.    

In the context of graph-based data analysis, vector-valued features are typically assigned to nodes. A central question is how these node features diffuse along edges, and this diffusion is precisely captured by the notion of a cellular sheaf and sheaf Laplacian~\cite{curry2014sheaves, hansen2019toward}.\\

\begin{definition}
Given a graph $G$, a \textit{cellular sheaf} $(G, \mathcal{F})$ consists of the following data:
\begin{itemize}
    \item For each node $v \in V(G)$ and each edge $e \in E(G)$, real vector spaces $\mathcal{F}(v)$ and $\mathcal{F}(e)$, called the \textit{stalks} at node $v$ and edge $e$, respectively.
    \item For each $v \in e$, a linear map $\mathcal{F}(v \in e) : \mathcal{F}(v) \rightarrow \mathcal{F}(e)$.
\end{itemize}

A \textit{0-cochain} $\mathbf{x}$ on the graph $G$, denoted by $\mathbf{x} = (x_v)_{v \in V}$, is an element of the direct sum of stalks over all nodes of $G$. We denote the set of all 0-cochains as $C^0(G, \mathcal{F})$. For a given 0-cochain $\mathbf{x}$, the $v$-component of the \textit{sheaf Laplacian}, denoted by $L_{\mathcal{F}}(\mathbf{x})_v$, is defined as \\
\begin{equation}\label{sheafLaplacianongraph}
    L_{\mathcal{F}}(\mathbf{x})_v = \sum_{v,u \in e}\mathcal{F}^*(v \in e)\left(\mathcal{F}(v \in e)(x_v) - \mathcal{F}(u \in e)(x_u) \right).
\end{equation}
Additionally, the $v$-component of the diagonal blocks, denoted by $D_{\mathcal{F}}(\mathbf{x})_v$, is defined as
        \begin{equation}\label{diagonalblockofsheafLaplacianongraph}
            \sum_{\{e \mid v \in e \}}\mathcal{F}^*(v \in e)\mathcal{F}(v \in e)(x_{v}).
        \end{equation}
Finally, the $v$-component of the normalized sheaf Laplacian, denoted by $\mathcal{L}_{\mathcal{F}}(\mathbf{x})_v$, is defined by
        \begin{equation}\label{normalizedsheafLaplacianongraph}
            \left((D_{\mathcal{F}})^{-\frac{1}{2}}L_{\mathcal{F}}(D_{\mathcal{F}})^{-\frac{1}{2}}\right)(\mathbf{x})_{v}.
        \end{equation}
\end{definition}

A cellular sheaf $(G,\mathcal{F})$ enables the diffusion of a node feature $x_v$ to an adjacent node $w$ via $\mathcal{F}^*(w \in e)\mathcal{F}(v \in e)(x_v)$. A 0-cochain $\mathbf{x}$ represents the collection of node features $x_v$ for each node $v$. The operator $L_{\mathcal{F}}(\mathbf{x})$ describes how the collection of node features $\mathbf{x}$ diffuses to adjacent nodes according to the cellular sheaf $\mathcal{F}$, thereby generalizing the graph Laplacian. Similarly, $\mathcal{L}_{\mathcal{F}}$ extends the concept of the normalized graph Laplacian.

\subsection{Hypergraphs}
A hypergraph generalizes the notion of a graph by capturing not only pairwise relationships but also higher-order, multidimensional interactions among entities. \\

\begin{definition}A hypergraph $H = (V(H), E(H), f_H)$ consists of the following data:
\begin{enumerate}
    \item $V(H)$ is a set of nodes of $H$
    \item $E(H)$ is a set of hyperedges of $H$
    \item $f_H : E(H) \to 2^{V(H)} \backslash V(H)$, a labeling function of $H$. 
\end{enumerate}
For each hyperedge $e \in E(H)$, we denote its dimension by $\Vert e \Vert \coloneqq \Vert f_H(e) \Vert$. Two hypergraphs $H$ and $H'$ are isomorphic~\cite{bretto2013hypergraph} if there exist bijections $a : V(H) \to V(H')$ and $b: E(H) \to E(H')$ satisfying: 
\begin{equation}
a_* \circ f_H = f_{H'} \circ b, (a^{-1})_* \circ f_{H'} =f_H \circ b^{-1}    
\end{equation}
where the induced map $a_* : 2^{V(H)} \to 2^{V(H')}$ is defined as 
\begin{equation}
a_*(\{v_0, \cdots, v_n \}) \coloneqq \{a(v_0), \cdots, a(v_n) \}.    
\end{equation}
\end{definition}

A graph is a special case of a hypergraph in which every hyperedge has dimension one; equivalently, $\Vert e \Vert =1$ for all $e \in E(H)$. Two hypergraphs are isomorphic if the patterns of how entities participate together in high-order relations are the same in both, regardless of the names of the entities.

\section{Sheaf Laplacians on Higher-Order domains}
\label{sec:sheaflaplaciansonhighorderdomain}
In this section, we define ordered simplicial complexes and symmetric simplicial sets, both of which organize collections of oriented $n$-dimensional simplices. Next, we explain how adjacency between simplices is defined via facet maps, which we use to extend the sheaf Laplacian to both ordered simplicial complexes and symmetric simplicial sets.

\subsection{Ordered simplicial complexes}
A simplicial complex $X$~\cite{wu2023simplicial} is a collection of unoriented $n$-dimensional objects, called simplices in $X$, represented by sets. Formally, let $X \subseteq 2^S$ for some finite set $S$. The set $X$ is called a simplicial complex if $\tau \in X$ and $\sigma \subset \tau$, then $\sigma \in X$. An element $\sigma \in X$ is referred to as $n$-simplex if $\Vert \sigma \Vert = n$. We denote $X_n$ as the set of $n$-simplices in $X$. The vertex set of the simplicial complex is defined by $V(X) \coloneqq {\bigcup}_{\sigma \in X} \ \sigma$. A common method for uniformly orienting the simplices of a simplicial complex is by imposing a total order on the vertex set $V(X)$.\\

\begin{definition}Let $X$ be a simplicial complex.
\begin{enumerate}
    \item $(X, <)$ is called an \textit{ordered simplicial complex} if $<$ is a total order on the vertex set $V(X)$.
    \item Suppose $(X,<)$ is an ordered simplicial complex.
    \begin{enumerate}
        \item The $i$th facet map, denoted by $d^n_i : X_n \to X_{n-1}$, is defined by
        \begin{equation}
        d^n_i(\{v_{j_0}, \cdots, v_{j_n} \}) \coloneqq \{v_{j_0}, \cdots, \widehat{v_{j_i}},\cdots, v_{j_n} \}    
        \end{equation}
        where the vertices satisfy $v_{j_0} < \cdots < v_{j_n}$. If $\sigma = d_i^n(\tau)$ for some $\tau$, we say $\sigma$ is a facet of $\tau$ (or equivalently, $\tau$ is a cofacet of $\sigma$), denoted by $\sigma \prec \tau$.
        \item The \textit{signed incidence} $[\sigma:\tau]$ between a simplex $\tau$ and its facet $\sigma=d_i^n(\tau)$~\cite{curry2014sheaves} is given by $[\sigma:\tau] \coloneqq (-1)^i$.
    \end{enumerate}
\end{enumerate}
\end{definition}
When $\sigma \prec \tau$, the signed incidence $[\sigma : \tau]$ indicates whether the orientation of the facet $\sigma$ is consistent, denoted by $+1$, or reversed, denoted by $-1$, relative to the orientation of $\tau$.\\

\begin{example}\label{ordscpxex}Let $V$ be a finite set and consider the simplicial complex $X = 2^{V} \setminus \emptyset$ with vertex set $V(X) = V$. An illustration of this simplicial complex $X$ for $V = \{v_0, v_1, v_2\}$ is provided in Fig.~\ref{ordscpxfigure}. Suppose that $<$ is a total order on $V(X)$ defined by $v_0 < v_1 < v_2$. Then we have the following facet maps and signed incidences:

\begin{itemize}
    \item $d^2_0(\{v_0, v_1, v_2 \}) = \{v_1, v_2 \}$ and \[[\{v_1, v_2 \} : \{v_0, v_1, v_2 \}]=+1.\]
    \item $d^2_1(\{v_0, v_1, v_2 \}) = \{v_0, v_2 \}$ and \[[\{v_0, v_2 \} : \{v_0, v_1, v_2 \}]=-1.\]
    \item $d^2_2(\{v_0, v_1, v_2 \}) = \{v_0, v_1 \}$ and \[[\{v_0, v_1 \} : \{v_0, v_1, v_2 \}]=+1.\]
\end{itemize}

\begin{figure}
    \centering
        \begin{tikzpicture}
        [
            he/.style={draw, ellipse, dotted, inner sep=10pt},        
            node/.style={circle, fill=black, inner sep=0pt, minimum size=1.2mm, label distance=2mm} 
        ]
        \node[node, label=above:{\footnotesize $v_0$}] (v0) at (-1.5,0.75) {};
        \node[node, label=below:{\footnotesize $v_1$}] (v1) at (0,0.1875) {};
        \node[node, label=above:{\footnotesize $v_2$}] (v2) at (0,1.5) {};
        \node[draw=none, fill=none] (g1) at (0, 0.6) {};
        \node[draw=none, fill=none] (g2) at (0, 0.8) {};
        \node[label=left:{\footnotesize $V$}] [he, fit=(v0) (v1) (v2)] {};
        \fill[gray!20] (v0.center) -- (v1.center) -- (v2.center) -- cycle;
        \draw (v0) to node[label=below:{\footnotesize $+1$}] {} (v1);
        \draw (v1) -- (g1);
        \draw (g2) to node[font=\footnotesize, pos=0.3, xshift=3mm, yshift=1mm] {$+1$} (v2);
        \draw (v2) to node[label=above:{\footnotesize $-1$}] {} (v0);
        \node at (-0.3,0.7) {\footnotesize $\{v_0, v_1, v_2\}$};
        \end{tikzpicture}
        \caption{Illustration of the simplicial complex $2^V \setminus \emptyset$.}
        \label{ordscpxfigure}
\end{figure}
\end{example}

\subsection{Symmetric simplicial sets}

A symmetric simplicial set $X$~\cite{grandis2001finite} is a collection of oriented $n$-dimensional objects, called simplices in $X$, represented by tuples, such that each simplex has well-defined facets. \\

\begin{definition}\label{symsSet}Let $X = \{X_n\}_{n \in \mathbb{N}}$ be a collection of finite sets.
\begin{enumerate}
    \item The collection $X$ is called a \textit{symmetric simplicial set} if it is equipped with a family of functions 
    \begin{equation}
    \{X(\mu) : X_n \to X_m \}_{\{\mu : [m] \to [n]\}}    
    \end{equation}
    satisfying if $\mu : [m] \to [n], \nu : [n] \to [p]$ then 
        \begin{equation}\label{identityforsymsSet}
          X(\nu \circ \mu) = X(\mu) \circ X(\nu).   
        \end{equation}
    An element of $X_n$ is called $n$-simplex in $X$. We say two $n$-simplices $\sigma, \sigma'$ are \textit{equivalent} if 
    \begin{equation}\label{equivalentsimplices}
    X(g)(\sigma')=\sigma
    \end{equation}
    for some $g \in \mathfrak{S}_n$. We denote the equivalence class of $\sigma$ by $[\sigma]$.
    \item For an $n$-simplex $\sigma$ in $X$, $i \in [n]$, 
        \begin{enumerate}
            \item the \textit{$i$th vertex} of $\sigma$, denoted by $\sigma_i$, is defined as 
                \begin{equation}
                    X\left((i)_{[n]} \right)(\sigma) \in X_0
                \end{equation}
                where $(i)_{[n]} : [0] \to [n]$ is a function with $(i)_{[n]}(0)=i$. 
                $(\sigma_0, \cdots, \sigma_n)$ is called the \textit{tuple representation} of $\sigma$.
            \item the \textit{$i$th facet} of $\sigma$, denoted by $d^n_i(\sigma)$, is defined as
                \begin{equation}
                    X\left(\delta^n_i \right)(\sigma) \in X_{n-1}
                \end{equation}
                where $\delta^n_i : [n-1] \to [n]$ is the unique order preserving injection from $\{0, \cdots, n-1\}$ to $\{0, \cdots, \hat{i}, \cdots, n \}$.
                If $\sigma = d_i^n(\tau)$ for some $\tau$, we say $\sigma$ is a facet of $\tau$ (or equivalently, $\tau$ is a cofacet of $\sigma$), denoted by $\sigma \prec \tau$.
            \item The \textit{signed incidence} $[\sigma:\tau]$ between a simplex $\tau$ and its facet $\sigma=d_i^n(\tau)$ is given by $[\sigma:\tau] \coloneqq (-1)^i$.
            \item Let $\varsigma^n_i : [n+1] \to [n]$ be the surjection defined by
            \begin{equation}
            \varsigma^n_i \coloneqq \begin{cases}
                    j, & \text{if $j \leq i$}\\
                    j-1, & \text{if $j > i$.}
                \end{cases}    
            \end{equation}
        $\sigma$ is said to be \textit{degenerate} if $\sigma$ lies in the image of 
        \begin{equation}
        X(\varsigma_i^{n-1}) : X_{n-1} \to X_{n}    
        \end{equation}
        for some $i \in [n-1]$. $\sigma$ is said to be \textit{nondegenerate} if $\sigma$ is not degenerate.
        \end{enumerate}
\end{enumerate}
\end{definition}

An $n$-simplex $\sigma$ has a tuple representation $(\sigma_0, \cdots, \sigma_n)$. Two $n$-simplices $\sigma, \sigma'$ are equivalent if $(\sigma_0, \cdots, \sigma_n)$ is equal to $(\sigma'_{g(0)}, \cdots, \sigma'_{g(n)})$ for some permutation $g \in \mathfrak{S}_n$. The equivalence class $[\sigma]$ is represented by a set $\{\sigma_0, \cdots, \sigma_n \}$. The facet $d_i^n(\sigma)$ is a $(n-1)$-simplex with tuple representation $(\sigma_0, \cdots, \widehat{\sigma_i}, \cdots, \sigma_n)$. The nondegeneracy of $\sigma$ implies $\sigma_i \neq \sigma_j$ for all $i \neq j$. These combinatorial structures of $\sigma$ are systematically encoded via maps $\mu : [m] \to [n]$, without explicitly referencing the tuple representation of $\sigma$.\\

\begin{example}\label{symsSetex}For a finite set $V$, the $V$-simplex $\Delta[V]$ is the symmetric simplicial set defined by the following data~\cite{choi2024cellular}:
\begin{itemize}
    \item $\Delta[V]_n$ is the set of all functions from $[n]$ to $V$. Hence an $n$-simplex of $\Delta[V]$ is a function 
    \begin{equation}
      (v_{j_0}, \cdots, v_{j_n})_V : [n] \to V  
    \end{equation}
    whose image of $i \in [n]$ is $v_{j_i}$.
    \item For any function $\mu : [m] \to [n]$, the induced map $\Delta[V](\mu) : \Delta[V]_n \to \Delta[V]_m$ is defined as 
    \begin{equation}\label{identityforVsimplex}
    \Delta[V](\mu)((v_{j_0}, \cdots, v_{j_n})_V) \coloneqq (v_{j_{\mu(0)}}, \cdots, v_{j_{\mu(m)}})_V.    
    \end{equation}
    Equation \eqref{identityforVsimplex} demonstrates that $\Delta[V]$ satisfies \eqref{identityforsymsSet}, confirming that $\Delta[V]$ is indeed a symmetric simplicial set.
\end{itemize}
An illustrative example of this symmetric simplicial set structure is provided in Fig.~\ref{symsSetfigure}.  Here are some characteristics of an $n$-simplex $(v_{j_0}, \cdots, v_{j_n})_V$:
\begin{itemize}
    \item Tuple representation of $(v_{j_0}, \cdots, v_{j_n})_V$ is $(v_{j_0}, \cdots, v_{j_n})$.
    \item The $i$th vertex of $(v_{j_0}, \cdots, v_{j_n})_V$ is $v_{j_i}$.
    \item The $i$th facet of $(v_{j_0}, \cdots, v_{j_n})_V$ is $(v_{j_0}, \cdots, \widehat{v_{j_i}} \cdots, v_{j_n})_V$.
    \item A simplex $(v_{j_0}, \cdots, v_{j_n})_V$ is degenerate if and only if $v_{j_i}=v_{j_{i'}}$ for some $i \neq i'$. In particular, the degenerate $n$-simplex $(v, \cdots, v)_V$ is called the $v$ \textit{of multiplicity} $(n+1)$.
\end{itemize}
Hence, an $n$-simplex in $\Delta[V]$ can be naturally interpreted as an ordered $(n+1)$-tuple of elements in $V$. \\
\end{example}

In Examples \ref{ordscpxex} and \ref{symsSetex}, we constructed an ordered simplicial complex and a symmetric simplicial set, respectively, from $V$. These two constructions differ as follows:

\begin{itemize}
    \item A simplex in $2^V \setminus \emptyset$ is represented as an unordered subset of $V$, whereas a simplex in $\Delta[V]$ is represented as an ordered tuple in $V$.
    
    \item $2^V \setminus \emptyset$ requires a total order on the vertex set $V$ to define the $i$th facet of a simplex explicitly. In contrast, $\Delta[V]$ does not depend on any choice of total order on $V$.
    
    \item In $2^V \setminus \emptyset$, no $n$-simplices exist for $n > \Vert V \Vert$. In contrast, $\Delta[V]$ contains $n$-simplices for every $n \in \mathbb{N}$, due to the allowance of degenerate simplices.
\end{itemize}

\begin{figure}
    \centering
        \begin{tikzpicture}
        [
            he/.style={draw, ellipse, dotted, inner sep=10pt},        
            node/.style={circle, fill=black, inner sep=0pt, minimum size=1.2mm, label distance=2mm} 
        ]
        \node[node, label=above:{\footnotesize $v_0$}] (v0) at (-1.5,0.75) {};
        \node[node, label=below:{\footnotesize $v_1$}] (v1) at (0,0.1875) {};
        \node[node, label=above:{\footnotesize $v_2$}] (v2) at (0,1.5) {};
        \node[draw=none, fill=none] (g1) at (0, 0.6) {};
        \node[draw=none, fill=none] (g2) at (0, 0.8) {};
        \node[label=left:{\footnotesize $V$}] [he, fit=(v0) (v1) (v2)] {};
        \fill[gray!20] (v0.center) -- (v1.center) -- (v2.center) -- cycle;
         \draw[-latex] (v0) to node[label=below:{\footnotesize $+1$}] {} (v1);
         \draw[-latex] (g2) to node[font=\footnotesize, pos=0.3, xshift=3mm, yshift=1mm] {$+1$} (v2);
         \draw[-latex] (v0) to node[label=above:{\footnotesize $-1$}] {} (v2);
        \draw (v1) -- (g1);
        \draw[->] (-0.7,0.7) arc[start angle=-135, end angle=150, radius=0.12];
        \node at (0.5,0.7) {\footnotesize $(v_0, v_1, v_2)_V$};
        \end{tikzpicture}
        \caption{Illustration of the symmetric simplicial set $\Delta[V]$.}
        \label{symsSetfigure}
\end{figure}

\subsection{Sheaf Laplacians}
In this subsection, we assume $X$ is either an ordered simplicial complex or a symmetric simplicial set. We define upper and lower adjacency relations between simplices of the same dimension in $X$ motivated by ~\cite{bodnar2021weisfeilercell}.\\

\begin{definition}
Let $\sigma, \sigma' \in X_n$ for some $n \in \mathbb{N}$.

\begin{enumerate}
    \item The simplices $\sigma$ and $\sigma'$ are said to be \textit{upper adjacent} if there exists $\tau \in X_{n+1}$ such that $\sigma \prec \tau$ and $\sigma' \prec \tau$.

    \item The simplices $\sigma$ and $\sigma'$ are said to be \textit{lower adjacent} if there exists $\mu \in X_{n-1}$ such that $\mu \prec \sigma$ and $\mu \prec \sigma'$.
\end{enumerate}
\end{definition}

In other words, two simplices are upper adjacent if they share a common cofacet, and they are lower adjacent if they share a common facet.\\

\begin{example}
Consider $\Delta[V]$ for $V = \{v_0, v_1, v_2\}$. Fig.~\ref{adjacencyforsymsSet} illustrates upper and lower adjacency relations between simplices:
\begin{itemize}
    \item The simplices $(v_0)_V$ and $(v_1)_V$ are upper adjacent since both are facets of the common simplex $(v_0, v_1)_V$.
    \item The simplices $(v_0, v_1)_V$ and $(v_1, v_2)_V$ are lower adjacent since both share the common facet $(v_1)_V$. \\
\end{itemize}
\begin{figure}
    \centering
        \begin{tikzpicture}
     [
            he/.style={draw, ellipse, dotted, inner sep=10pt},        
            node/.style={circle, fill=black, inner sep=0pt, minimum size=1.2mm, label distance=2mm} 
        ]
        \node[node, label=left:{\footnotesize $v_0$}] (v0) at (-1.5,0.75) {};
        \node[node, label=below:{\footnotesize $v_1$}] (v1) at (0,0.1875) {};
        \node[node, label=above:{\footnotesize $v_2$}] (v2) at (0,1.5) {};
        \node[draw=none, fill=none] (g1) at (0, 0.6) {};
        \node[draw=none, fill=none] (g2) at (0, 0.8) {};
        \node[label=left:{\footnotesize $V$}] [he, fit=(v0) (v1) (v2)] {};
        \draw[-latex] (v0) to node[midway, left, xshift = 0.5cm, yshift = -0.3cm] {\footnotesize $(v_0, v_1)_{V}$} (v1);
        \draw[-latex] (v1) to node[midway, right] {\footnotesize $(v_1, v_2)_{V}$} (v2);
        \end{tikzpicture}
    \caption{Illustration of adjacency relations in $\Delta[V]$.}
    \label{adjacencyforsymsSet}
\end{figure}
\end{example}

To utilize simplices for data analysis, it is essential to assign vector-valued features to each simplex. Moreover, given rules that transfer features from an $n$-simplex to its facets and vice versa, it becomes possible to diffuse features between adjacent simplices. Such feature diffusion mechanisms are formally captured by the concept of a cellular sheaf~\cite{choi2024cellular}.\\

\begin{definition}
A cellular sheaf $(X, \mathcal{F})$ of degree $m$ consists of the following data:
\begin{itemize}
    \item For $n \in [m]$, $n$-simplex $\sigma \in X_n$, a $\mathbb{R}$-vector space $\mathcal{F}(\sigma)$, called the \textit{stalk at $\sigma$}.
    \item For $n \in [m]$, $n$-simplex $\sigma \in X_n$ and facet $d_i^n(\sigma)$, a linear map
    \begin{equation}
        \mathcal{F}\left(d_i^n(\sigma) \prec \sigma\right) : \mathcal{F}\left(d_i^n(\sigma)\right) \rightarrow \mathcal{F}(\sigma)
    \end{equation}
    satisfying the following compatibility conditions:
    \begin{equation}\label{facemapidentity}
    \begin{split}
        &\mathcal{F}\left(d_j^{n-1}(d_i^n(\sigma)) \prec d_i^n(\sigma)\right) \circ \mathcal{F}\left(d_i^n(\sigma) \prec \sigma\right) \\
        & \qquad= \mathcal{F}\left(d_{i-1}^{n-1}(d_j^n(\sigma)) \prec d_j^n(\sigma)\right) \circ \mathcal{F}\left(d_j^n(\sigma) \prec \sigma\right)
    \end{split}
    \end{equation}
    for every $n \in [m]$, $i \in [n]$, $j<i$, and $\sigma \in X_n$.
\end{itemize}
Elements of $\mathcal{F}(\sigma)$ are called \textit{features at simplex $\sigma$}.\\
\end{definition}

A cellular sheaf $\mathcal{F}$ of degree $m$ on $X$ assigns a vector space to each $n$-simplex for $n \in [m]$, together with linear maps between these vector spaces corresponding to the facet relations. When $m$ is sufficiently large, we may omit specifying its value. Intuitively, the map $\mathcal{F}^*(d_i^n(\sigma) \prec \sigma)$ describes how features at the simplex $\sigma$ are restricted to its $i$th facet, while the map $\mathcal{F}(d_i^n(\sigma) \prec \sigma)$ describes how features from the $i$th facet are propagated upward to $\sigma$.

Using the cellular sheaf structure, a feature $x_{\sigma} \in \mathcal{F}(\sigma)$ at simplex $\sigma$ can be diffused to an adjacent simplex $\sigma'$ as follows. Suppose $\sigma$ and $\sigma'$ are upper adjacent with common cofacet $\tau$. Then, the feature $x_{\sigma}$ can be diffused to $\sigma'$ by
\begin{equation}
(-1)^{[\sigma : \tau] + [\sigma' : \tau]} \mathcal{F}^*(\sigma' \prec \tau)\,\mathcal{F}(\sigma \prec \tau)(x_{\sigma}).
\end{equation}

Similarly, if $\sigma$ and $\sigma'$ are lower adjacent with common facet $\mu$, the feature $x_{\sigma}$ can be diffused to $\sigma'$ by
\begin{equation}
(-1)^{[\mu : \sigma] + [\mu : \sigma']} \mathcal{F}(\mu \prec \sigma')\,\mathcal{F}^*(\mu \prec \sigma)(x_{\sigma}).
\end{equation}

These feature diffusions between adjacent simplices are illustrated in Fig.~\ref{diffusionoffeaturestoadjacentfaces}. Aggregating all such possible feature diffusions according to these rules results in the construction of the \textit{sheaf Laplacian}.\\

\begin{figure}
\centering
\begin{tikzcd}
& \mathcal{F}(\tau) \arrow[dr, "(-1)^{[\sigma': \tau]}\mathcal{F}^*(\sigma'\prec \tau)"] & \\
\mathcal{F}(\sigma) \arrow[ur, "(-1)^{[\sigma : \tau]}\mathcal{F}(\sigma\prec \tau)"] \arrow[dr, "(-1)^{[\mu : \sigma]}\mathcal{F}^*(\mu \prec \sigma)", swap] &   & \mathcal{F}(\sigma')\\
& \mathcal{F}(\mu) \arrow[ur, , "(-1)^{[\mu : \sigma']}\mathcal{F}(\mu \prec \sigma')", swap]&
\end{tikzcd}    
\caption{Illustration of feature diffusion between adjacent simplices $\sigma$ and $\sigma'$.}

\label{diffusionoffeaturestoadjacentfaces}
\end{figure}

\begin{definition}Let $\mathcal{F}$ be a cellular sheaf on $X$ and $k \in \mathbb{N}$.
    \begin{enumerate}
        \item A \textit{$k$-cochain}, denoted by $\mathbf{x} = (x_\sigma)_{\sigma \in X_k}$, is an element of the direct sum of stalks over all $k$-simplices in $X$. We call $x_\sigma$ the $\sigma$-component of the cochain $\mathbf{x}$. The set of all $k$-cochains is denoted by $C^k(X,\mathcal{F})$.
        \item For a $k$-cochain $\mathbf{x}$, the $\sigma$-component of the \textit{degree $k$ sheaf Laplacian} $L_{\mathcal{F}}^k(\mathbf{x})_{\sigma}$ is defined by
        \begin{equation}\label{uppersheaflaplacian}
        \begin{split}
            &\sum_{\sigma'', \tau}(-1)^{[\sigma : \tau]+[\sigma' : \tau]}\mathcal{F}^*(\sigma \prec \tau)\mathcal{F}(\sigma' \prec \tau)(x_{\sigma'}) \\
            & \ +\sum_{\sigma'', \mu}(-1)^{[\mu : \sigma]+[\mu : \sigma'']}\mathcal{F}(\mu \prec \sigma)\mathcal{F}^*(\mu \prec \sigma'')(x_{\sigma''})
        \end{split}
        \end{equation}
        where \begin{itemize}
                \item $\sigma'$ ranges over all simplices upper adjacent to $\sigma$
                \item $\sigma''$ ranges over all simplices lower adjacent to $\sigma$
                \item $\tau$ ranges over all common cofacets of $\sigma$ and $\sigma'$
                \item $\mu$ ranges over all common facets of $\sigma$ and $\sigma''$.
              \end{itemize}
 Additionally, the $\sigma$-component of the diagonal blocks, denoted by $D^k_{\mathcal{F}}(\mathbf{x})_{\sigma}$, is defined as
        \begin{equation}\label{upperdiagonalizedsheaflaplacian}
        \begin{split}
            & \sum_{\{\tau \mid \sigma \prec \tau \}}\mathcal{F}^*(\sigma \prec \tau)\mathcal{F}(\sigma \prec \tau)(x_{\sigma}) \\
            & \quad + \sum_{\{\mu \mid \mu \prec \sigma \}}\mathcal{F}(\mu \prec \sigma)\mathcal{F}^*(\mu \prec \sigma)(x_{\sigma}).
        \end{split}
        \end{equation}
        \item For a $k$-cochain $\mathbf{x}$, the $\sigma$-component of the \textit{normalized degree $k$ sheaf Laplacian}, denoted by $\mathcal{L}^k_{\mathcal{F}}(\mathbf{x})_{\sigma}$, is defined by
        \begin{equation}\label{normalizedsheafLaplacian}
            \left((D^k_{\mathcal{F}})^{-\frac{1}{2}}L^k_{\mathcal{F}}(D^k_{\mathcal{F}})^{-\frac{1}{2}}\right)(\mathbf{x})_{\sigma}.
        \end{equation}
    \end{enumerate}

If $X$ is an ordered simplicial complex with a specified total order $<$, we explicitly denote the sheaf Laplacian as $L_{\mathcal{F},<}^k$.\\
\end{definition}

Given a cellular sheaf $\mathcal{F}$, a $k$-cochain provides an assignment of features to every $k$-simplex in $X$. The degree $k$ sheaf Laplacian characterizes the diffusion of features from each $k$-simplex to its adjacent simplices through $\mathcal{F}$.\\

\begin{example}\label{SheafLaplacianVsimplex}Let $X = \Delta[V]$ be a $V$-simplex and $\mathcal{F}$ be a cellular sheaf on $X$. Suppose $\mathbf{x}$ is a 0-cochain and $v, w \in V$. Then the $(v)_V$-component of the degree 0 sheaf Laplacian can be computed as follows:
\begin{itemize}
    \item The simplices $(v)_V$ and $(w)_V$ are upper adjacent with common cofacets $(v,w)_V$ and $(w,v)_V$, satisfying the signed incidences:
    \begin{enumerate}
        \item $[(v)_V : (v,w)_V] = [(w)_V : (w,v)_V]=-1$
        \item $[(w)_V : (v,w)_V] = [(v)_V : (w,v)_V]=+1$.
    \end{enumerate}
    \item The simplex $(v)_V$ is also upper adjacent to itself through the common cofacets $(v,w)_V$ and $(w,v)_V$, satisfying the signed incidences:
    \begin{enumerate}
        \item $[(v)_V : (v,w)_V]=-1$
        \item $[(v)_V : (w,v)_V]=+1$.
    \end{enumerate}
\end{itemize}
Hence, by \eqref{uppersheaflaplacian}, the $(v)_V$-component of the degree 0 sheaf Laplacian is expressed as:
\begin{equation}
\begin{split}
    &\sum_{w \in V}\mathcal{F}^*\left((v)_V \prec (v,w)_V\right)\mathcal{F}\left((v)_V \prec (v,w)_V\right)(x_{v})\\
    & +\sum_{w \in V}\mathcal{F}^*\left((v)_V \prec (w,v)_V\right)\mathcal{F}\left((v)_V \prec (w,v)_V\right)(x_{v})\\
    & -\sum_{w \in V}\mathcal{F}^*\left((v)_V \prec (v,w)_V\right)\mathcal{F}\left((w)_V \prec (v,w)_V\right)(x_{w})\\
    & - \sum_{w \in V}\mathcal{F}^*\left((v)_V \prec (w,v)_V\right)\mathcal{F}\left((w)_V \prec (w,v)_V\right)(x_{w}).
\end{split}
\end{equation}
\end{example}
\section{Methodology}
\label{sec:methodology}
In this section, we first demonstrate that adjacency between hyperedges in general hypergraphs is not inherently well-defined. We discuss the most natural approach to resolving this issue—using the induced simplicial complex—which provides a clear notion of adjacency. However, we will identify two significant problems arising from this approach.

\subsection{Problem definition}
In a graph, two nodes are considered adjacent if an edge directly connects them, and two edges are adjacent if they share a common node. The degree 0 sheaf Laplacian effectively describes how node features diffuse through edges, while the degree 1 sheaf Laplacian captures the diffusion of edge features across adjacent edges. These concepts accurately reflect feature diffusion dynamics in graphs, provided there is an adequate number of edges.

However, such conditions do not directly translate to hypergraphs. In hypergraphs, adjacency between two nodes requires the existence of a dimension 1 hyperedge explicitly connecting them. Consequently, in the absence of dimension 1 hyperedges, the degree 0 sheaf Laplacian becomes zero. Similarly, two hyperedges of dimension $n$ are adjacent if there exists a hyperedge of dimension $n+1$ containing both or if one hyperedge of dimension $n-1$ is contained within the other. Crucially, having many hyperedges in total does not necessarily imply the presence of numerous adjacent hyperedges. The following example clearly illustrates this limitation.\\

\begin{example}\label{exampleofhypergraphwithsheaflaplacianzero}
Consider a hypergraph $H = (V(H), E(H), f_H)$ defined by:
\begin{itemize}
    \item $V(H) \coloneqq \{v_0, v_1, v_2, v_3\}$, $E(H) \coloneqq \{e_0, e_1, e_2\}$
    \item $f_H(e_0) \coloneqq \{v_0, v_1, v_2 \}$
    \item $f_H(e_1) \coloneqq \{v_0, v_2, v_3 \}$
    \item $f_H(e_2) \coloneqq \{v_0, v_1, v_3 \}$.
\end{itemize}
This hypergraph is illustrated in Fig.~\ref{failureofadjacent}. All hyperedges have dimension 2, and there are no dimension 1 hyperedges connecting any two nodes. Thus, the degree 0 sheaf Laplacian is zero. Furthermore, since no hyperedges contain others or share common subsets, no hyperedges are adjacent, leading the degree $n$ sheaf Laplacian to vanish for all $n$.\\
\end{example}

\begin{figure}
    \centering
    \begin{tikzpicture}
        [
            he/.style={draw, ellipse, dotted, inner sep=10pt},        
            node/.style={circle, fill=black, inner sep=0pt, minimum size=1.2mm, label distance=2mm}, 
            tight fit/.style={inner xsep=5pt, inner ysep=5pt}       
        ]
        \node[draw=none, fill=none] (v0) at (-1.5,0.75) {};
        \node[node, label=below:{\footnotesize $v_1$}] (v1) at (-1,-0.1875) {};
        \node[node, label=below:{\footnotesize $v_0$}] (v2) at (0,0.1875) {};
        \node[node, label=above:{\footnotesize $v_2$}] (v3) at (0,1.5) {};
        \node[node, label=below:{\footnotesize $v_3$}] (v4) at (1,-0.1875) {};
        \node[draw=none, fill=none] (g1) at (1.5,0.75) {}; 
        \node[draw=none, fill=none] (g3) at (0,-1.5) {};

        \node[label=left:{\footnotesize $e_0$}] [he, fit=(v0) (v1) (v2) (v3)] {};
        \node[label=right:{\footnotesize $e_1$}] [he, fit=(v2) (v3) (v4) (g1)] {};
        \node[label=below:{\footnotesize $e_2$}] [he, tight fit, fit=(v2) (v1) (v4) (g3)] {}; 
    \end{tikzpicture}
    \caption{An example of a hypergraph that has no adjacent nodes or adjacent hyperedges.}
    \label{failureofadjacent}
\end{figure}

To address the above-mentioned problem with hypergraphs, one natural approach involves systematically inserting all subsets of intersections among hyperedges into the set of hyperedges. This construction yields an induced simplicial complex, which we denote by
\begin{equation}
S(H) \coloneqq \{S \mid S \subseteq f_H(e) \text{ for some } e \in E(H)\} \cup V(H).    
\end{equation}
The resulting complex $S(H)$ thus incorporates all possible unoriented sub-relations induced from the original hyperedge structure of $H$. The vertex set $V(S(H))$ coincides with $V(H)$. By imposing a total order on $V(H)$, the sheaf Laplacian on this complex becomes well-defined, effectively resolving the adjacency issues described earlier. However, adopting this solution introduces two new significant problems, which we outline next.\\
\begin{enumerate}
\item First, the original hypergraph $H$ cannot necessarily be recovered from the induced simplicial complex $S(H)$. In other words, non-isomorphic hypergraphs may induce identical simplicial complexes. For example, consider two hypergraphs $H = (\{v_0, v_1, v_2\}, \{e\}, f_H)$,  $H' = (\{v_0, v_1, v_2\}, \{e, e'\}, f_{H'})$ with labeling functions defined by
\begin{equation}
f_H(e) = f_{H'}(e) = \{v_0, v_1, v_2\},\quad f_{H'}(e') = \{v_0, v_1\}.    
\end{equation}
Clearly, the hypergraphs \(H\) and \(H'\) are non-isomorphic. Nevertheless, their induced simplicial complexes coincide:
\begin{equation}
S(H) = S(H') = 2^{\{v_0, v_1, v_2\}} \setminus \emptyset.    
\end{equation}
This ambiguity arises because it becomes impossible to determine whether $\{v_0, v_1\}$ originates as a subrelation of hyperedge $e$, derives from hyperedge $e'$, or exists as an independent hyperedge itself.

\item Second, the resulting sheaf Laplacian depends explicitly on the choice of the total order assigned to the vertex set~\cite{bodnar2021weisfeilertop}. To illustrate, consider a hypergraph \(H = (\{v_0, v_1, v_2\}, \{e\}, f_H)\) with a single hyperedge defined as \(f_H(e)= \{v_0, v_1, v_2\}\). Then its induced simplicial complex is
\begin{equation}
S(H) = 2^{\{v_0, v_1, v_2\}} \setminus \emptyset.
\end{equation}

Let $<_1$ and $<_2$ be two distinct total orders on the vertex set $V(S(H)) = \{v_0, v_1, v_2\}$, defined by $v_0 <_1 v_1 <_1 v_2$ and $v_1 <_2 v_0 <_2 v_2$. Suppose $\mathcal{F}$ is a cellular sheaf on $S(H)$ satisfying
\begin{equation}
\mathcal{F}(\{v_1\} \prec \{v_1, v_2\}) = \mathcal{F}(\{v_2\} \prec \{v_1, v_2\}) = 0.
\end{equation}
For simplicity, denote 
\begin{equation}
A(\{v_i, v_j\}) \coloneqq \mathcal{F}(\{v_i, v_j\} \prec \{v_0, v_1, v_2\})
\end{equation}
for $i, j \in [2]$. Then, for a given $\mathbf{x} \in \mathcal{F}(\{v_1, v_2\})$, we have
\begin{equation}
L^1_{\mathcal{F},<_1}(\mathbf{x})_{\{v_0, v_1\}} = A(\{v_0, v_1\})^* A(\{v_1, v_2\})(\mathbf{x}),
\end{equation}
whereas under the alternative order,
\begin{align*}
L^1_{\mathcal{F},<_2}(\mathbf{x})_{\{v_0, v_1 \}} &= -A(\{v_0, v_1 \})^*A(\{v_1, v_2 \})(\mathbf{x})\\
&\neq L^1_{\mathcal{F},<_1}(\mathbf{x})_{\{v_0, v_1 \}}.
\end{align*}

Therefore, $L^1_{\mathcal{F}, <_1} \neq L^1_{\mathcal{F}, <_2}$. \\
\end{enumerate}

The sheaf Laplacian is well-defined on symmetric simplicial sets without requiring a total order. Thus, if we can construct a symmetric simplicial set from a given hypergraph and demonstrate that this construction preserves the hypergraph's structural information, we can effectively resolve the aforementioned problems inherent to hypergraphs.

\subsection{Symmetric simplicial lifting}
Given a hyperedge $e$, the $f_H(e)$-simplex $\Delta[f_H(e)]$ is the collection of all possible oriented relations within $e$, represented by tuples, that can be predicted from the observed relation $e$. In general hypergraphs with multiple hyperedges, the $f_H(e)$-simplices from different hyperedge $e$ must be glued together along shared nodes. Crucially, since the original hypergraph $H$ cannot be reconstructed from $S(H)$ due to the lack of information about which hyperedge each possible relation originated from, the elements of $\Delta[f_H(e)]$ and $\Delta[f_H(e')]$ for different hyperedges $e, e'$ should not be identified.

For example, in Fig.~\ref{motivationofsymssetfromhypergraph}, $(v_2, v_3)_{f_H(e)} \in \Delta[f_H(e)]_1$ and $(v_2, v_3)_{f_H(e')} \in \Delta[f_H(e')]_1$ share the same image $\{v_2, v_3\}$ but must be treated as distinct objects. This distinction indicates $(v_2, v_3)_{f_H(e)}$ originates from $e$, while $(v_2, v_3)_{f_H(e')}$ originates from $e'$, hence can avoid the problem occurred in $S(H)$.

However, not all ordered relations are treated as distinct. For instance, $(v_2, v_2)_{f_H(e)}$, $(v_2, v_2)_{f_H(e')}$, and $(v_2, v_2)_{\{v_2\}}$ all represent the shared node $v_2$ as 2-tuples. Since we glue $\Delta[f_H(e)], \Delta[f_H(e')]$ along the shared nodes $\{v_2, v_3 \}$, these elements should be identified. Extending this principle to all hyperedges, nodes, and $n$-dimensional cases yields symmetric simplicial lifting $\Delta(H)$. \\

\begin{figure}
    \centering
 \begin{tikzpicture}
      [
            he/.style={draw, ellipse, dotted, inner sep=10pt},        
            node/.style={circle, fill=black, inner sep=0pt, minimum size=1.2mm, label distance=2mm} 
        ]
        \node[node, label=left:{\footnotesize $v_0$}] (w0) at (4.5,0.75) {};
        \node[node, label=below:{\footnotesize $v_1$}] (w1) at (4.5,-0.1875) {};
        \node[node, label=below:{\footnotesize $v_2$}] (w2) at (6,0.1875) {};
        \node[node, label=above:{\footnotesize $v_3$}] (w3) at (6,1.5) {};
        \node[node, label=right:{\footnotesize $v_4$}] (w4) at (7.5,0.9375) {};
        \node[draw=none, fill=none] (a1) at (6.9375,-0.1875) {}; 
        \node[draw=none, fill=none] (a2) at (4.875,-0.09375) {};
        \node[draw=none, fill=none] (a3) at (5.625,0.09375) {};
        \fill[gray!20] (w0.center) -- (w1.center) -- (w2.center) to[bend left] (w3.center) -- cycle;
        \fill[gray!20] (w2.center) to[bend right] (w3.center) -- (w4.center) -- cycle;
        \node[label=left:{\footnotesize $e$}][he, fit=(w0) (w1) (w2) (w3)] {};
        \node[label=right:{\footnotesize $e'$}][he, fit=(w2) (w3) (w4) (a1)] {};
        \node[xshift=4mm, yshift=-5mm, label={[label distance=0.5cm]above left:$(v_0, v_1, v_2, v_3)_{f_H(e)}$}, inner sep=0pt] at ($(w0)!0.5!(w3)$) {};
        \node[xshift=-8mm, yshift=-10mm, label={[label distance=0.5cm]above right:$(v_2, v_3, v_4)_{f_H(e')}$}, inner sep=0pt] at ($(w2)!0.6!(w4)$) {};
        \draw[-latex] (w0) -- (w1);
        \draw[-latex] (w1) -- (w2);
        \draw[-latex] (w0) -- (w3);
        \draw[-latex, dotted] (w0) -- (w2);
        \draw (w1) -- (a2) -- cycle;
        \draw (w1) -- (w3);
        \draw[-latex] (w3) -- (w4);
        \draw[-latex] (w2) -- (w4);
        \draw[-latex, bend left] (w2) to node[midway, left, xshift=0.1cm] {} (w3);
        \draw[-latex, bend right] (w2) to node[midway, right, xshift=-0.1cm] {} (w3);
        \end{tikzpicture}
    \caption{Motivation for symmetric simplicial lifting.}
    \label{motivationofsymssetfromhypergraph}
\end{figure}

\begin{definition}
Given a hypergraph $H$, we define its \textit{symmetric simplicial lifting} $\Delta(H)$ by the following data:
\begin{enumerate}
    \item $\Delta(H)_n$ is given by
    \begin{equation}
\left(\bigsqcup_{e \in E(H)} \Delta[f_H(e)]_n \right) \bigsqcup \left(\bigsqcup_{v \in V(H)} \Delta[\{v\}]_n \right)\Big/{\sim}
\end{equation}
where $\sim$ is the equivalence relation generated by 
\begin{equation*}
\Delta(H)_n \ni (v, \cdots, v)_{f_H(e)} \sim (v, \cdots, v)_{\{v\}} \in \Delta(H)_n 
\end{equation*}
for any $v \in e$. 
\begin{itemize}
    \item For a hyperedge $e \in E(H)$, the equivalence class of $(v_0, \cdots, v_n)_{f_H(e)}$ is denoted by $[v_0, \cdots, v_n]_e$.
    \item For a node $v \in V(H)$, the equivalence class of $(v, \cdots, v)_v$ is denoted by $[v, \cdots, v]_v$.
\end{itemize}
\item Given $\mu : [m] \to [n]$ and $\sigma \in \Delta(H)_n$, $\Delta(H)(\mu)(\sigma)$ is defined by
\begin{equation}\label{identityforsymsSetfromhypergraph}
\begin{cases}
    [v_{i_{\mu(0)}}, \cdots, v_{i_{\mu(m)}}]_e \in \Delta(H)_m, & \text{if } \sigma = [v_{i_0}, \cdots, v_{i_n}]_e \\
    [v, \cdots, v]_v \in \Delta(H)_m, & \text{if } \sigma = [v, \cdots, v]_v.
\end{cases}
\end{equation}

\end{enumerate}

Equation \eqref{identityforsymsSetfromhypergraph} demonstrates that $\Delta(H)$ satisfies \eqref{identityforsymsSet}, confirming that $\Delta(H)$ is indeed a symmetric simplicial set. We say that the simplex \([v_{i_0},\dots,v_{i_{\|e\|}}]_e \in \Delta(H)_{\Vert e \Vert}\) is a \textit{maximal nondegenerate \(\|e\|\)-simplex} if all vertices \(v_{i_l}\) are distinct for \(l \neq l'\).\\
\end{definition}

$\Delta[f_H(e)]$ is embedded into $\Delta(H)$ for each hyperedge $e$. For example, $m$-simplex $(v_{j_0}, \ldots, v_{j_m})_{f_H(e)}$ in $\Delta[f_H(e)]_m$ is embedded into $m$-simplex $[v_{j_0}, \ldots, v_{j_m}]_e$ in $\Delta(H)$. In particular, when $v \in f_H(e)$, the $v$ of multiplicity $(n+1)$, $(v, \cdots, v)_V$, is embedded into $[v, \ldots, v]_e$. We identify all $v$ of multiplicity $(n+1)$ from different hyperedges as $[v, \cdots, v]_v$. \\

\begin{example}\label{exsymssetfromhypergraph}Consider a hypergraph $H=(V(H), E(H), f_H)$, where the node set is $V(H) = \{v_0, v_1, v_2, v_3, v_4 \}$, the hyperedge set is $E(H) = \{e, e' \}$, and the labeling function is defined by $f_H(e) = \{v_0, v_1, v_2, v_3 \}$ and $f_H(e') = \{v_2, v_3, v_4 \}$. Fig.~\ref{symssetfromhypgph} illustrates several simplices of symmetric simplicial lifting $\Delta(H)$, including all 0-simplices, some 1-simplices (such as $[v_2, v_3]_e$, $[v_2, v_3]_{e'}$, and $[v_4, v_3]_{e'}$), and a 2-simplex, $[v_1, v_2, v_0]_e$.\\ 
\end{example}

Given the symmetric simplicial lifting $\Delta(H)$ in Example~\ref{exsymssetfromhypergraph}, we can construct a hypergraph $(V, E, f)$ (Fig.~\ref{hypergraphinducedbysymsset}) using solely the data from $\Delta(H)$, through the following systematic procedure:

\begin{itemize}
    \item Each $0$-simplex in $\Delta(H)$ becomes a node, hence
    \begin{equation}
    V \coloneqq \Delta(H)_0.
    \end{equation}
    \item Each equivalence class of maximal nondegenerate simplex becomes a hyperedge, hence
    \begin{equation}
    E \coloneqq \{ [[v_0, v_1, v_2, v_3]_e],\ [[v_2, v_3, v_4]_{e'}] \}.    
    \end{equation}
    \item Labeling function $f$ sends each equivalence class to its underlying set.
    \begin{enumerate}
        \item $[[v_0, v_1, v_2, v_3]_e] \mapsto \{v_0, v_1, v_2, v_3\}$
        \item $[[v_2, v_3, v_4]_{e'}] \mapsto \{v_2, v_3, v_4\}$. \\
    \end{enumerate}
\end{itemize}

We can easily demonstrate the isomorphism between hypergraphs $H$ and $(V,E,f)$. This reconstruction method extends naturally to arbitrary hypergraphs, thereby establishing a general framework for recovering the hypergraph structure from symmetric simplicial set data.\\

\begin{figure}
    \centering
    \begin{subfigure}[b]{0.45\textwidth}
        \centering
        \begin{tikzpicture}
      [
            he/.style={draw, ellipse, dotted, inner sep=10pt},        
            node/.style={circle, fill=black, inner sep=0pt, minimum size=1.2mm, label distance=2mm} 
        ]
        \node[node, label=left:{\footnotesize $[v_0]_e$}] (w0) at (4.5,0.75) {};
        \node[node, label=below:{\footnotesize $[v_1]_e$}] (w1) at (4.5,-0.1875) {};
        \node[node, label=right:{\footnotesize $[v_2]_e$}] (w2) at (6,0.1875) {};
        \node[node, label=above:{\footnotesize $[v_3]_e$}] (w3) at (6,1.5) {};
        \node[node, label=right:{\footnotesize $[v_4]_{e'}$}] (w4) at (7.5,0.9375) {};
        \node[draw=none, fill=none] (a1) at (6.9375,-0.1875) {}; 
        \node[draw=none, fill=none] (a2) at (4.875,-0.09375) {};
        \node[draw=none, fill=none] (a3) at (5.625,0.09375) {};

        \node[he, fit=(w0) (w1) (w2) (w3)] {};
        \node[he, fit=(w2) (w3) (w4) (a1)] {};
        \draw[-latex] (w4) to node[midway, right, xshift = -0.1cm, yshift = 0.2cm] {\footnotesize $[v_4, v_3]_{e'}$} (w3);
        \draw[-latex, bend left] (w2) to node[midway, left, xshift=0.1cm] {\footnotesize $[v_2,v_3]_e$} (w3);
        \draw[-latex, bend right] (w2) to node[midway, right, xshift=-0.1cm] {\footnotesize $[v_2,v_3]_{e'}$} (w3);
        \draw[-latex] (w0) -- (w1);
        \draw (w1) -- (a2);
        \draw[-latex] (a3) -- (w2);
        \draw[-latex] (w2) -- (w0);
        \draw[-latex] (5,0.25) arc[start angle=-135, end angle=150, radius=0.12];
        \node at (5.25,0) {\footnotesize $[v_1, v_2, v_0]_e$};
        \end{tikzpicture}
        \caption{$\Delta(H)$}
        \label{symssetfromhypgph}
    \end{subfigure}
    \hfill
    \begin{subfigure}[b]{0.45\textwidth}
        \centering
        \begin{tikzpicture}
   [
            he/.style={draw, ellipse, dotted, inner sep=10pt},        
            node/.style={circle, fill=black, inner sep=0pt, minimum size=1.2mm, label distance=2mm} 
        ]
        \node[node, label=left:{\footnotesize $v_0$}] (w0) at (4.5,0.75) {};
        \node[node, label=below:{\footnotesize $v_1$}] (w1) at (4.5,-0.1875) {};
        \node[node, label=below:{\footnotesize $v_2$}] (w2) at (6,0.1875) {};
        \node[node, label=above:{\footnotesize $v_3$}] (w3) at (6,1.5) {};
        \node[node, label=right:{\footnotesize $v_4$}] (w4) at (7.5,0.9375) {};
        \node[draw=none, fill=none] (a1) at (6.9375,-0.1875) {}; 
        \node[draw=none, fill=none] (a2) at (4.875,-0.09375) {};
        \node[draw=none, fill=none] (a3) at (5.625,0.09375) {};

    \fill[gray!15] (w0.center) -- (w1.center) -- (w2.center) to[bend left] (w3.center) -- cycle;

    \fill[gray!25] (w2.center) to[bend right] (w3.center) -- (w4.center) -- cycle;

        \node[he, fit=(w0) (w1) (w2) (w3)] {};
        \node[he, fit=(w2) (w3) (w4) (a1)] {};

        \draw[-latex] (w0) -- (w1);
        \draw[-latex] (w1) -- (w2);
        \draw[-latex] (w0) -- (w3);
        \draw[-latex, dotted] (w0) -- (w2);
        \draw (w1) -- (a2) -- cycle;
        \draw (w1) -- (w3);
        \draw[-latex] (w3) -- (w4);
        \draw[-latex] (w2) -- (w4);
        \draw[-latex, bend left] (w2) to node[midway, left, xshift=0.1cm] {} (w3);
        \draw[-latex, bend right] (w2) to node[midway, right, xshift=-0.1cm] {} (w3);
        \node[xshift=4mm, yshift=-5mm, label={[label distance=0.5cm]above left:$[[v_0, v_1, v_2, v_3]_e]$}, inner sep=0pt] at ($(w0)!0.5!(w3)$) {};
        \node[xshift=-8mm, yshift=-10mm, label={[label distance=0.5cm]above right:$[[v_2, v_3, v_4]_{e'}]$}, inner sep=0pt] at ($(w2)!0.6!(w4)$) {};
        \end{tikzpicture}
        \caption{$H(\Delta(H))$}
        \label{hypergraphinducedbysymsset}
    \end{subfigure}
    \caption{(a) Symmetric simplicial lifting $\Delta(H)$. (b) Recovered hypergraph $H(\Delta(H))$ is isomorphic to $H$.}
    \label{figurehypergraph}
\end{figure}

\begin{definition}Let $H$ be a hypergraph. The recovered hypergraph $H(\Delta(H))=(V,E,f)$ is a hypergraph with the following data:
    \begin{enumerate}
        \item $V$ is $V(H)$, the set of nodes of $H$.
        \item $E$ is the set of equivalence classes of maximal nondegenerate $n$-simplices for $n > 0$.
        \item For $[[v_{i_0}, \cdots, v_{i_{\Vert e \Vert}}]_e] \in E$, the labeling function $f$ is defined by 
        \begin{equation}
        f([[v_{i_0}, \cdots, v_{i_{\Vert e \Vert}}]_e]) \coloneqq \{v_{i_0}, \cdots, v_{i_{\Vert e \Vert}} \}.    
        \end{equation}
    \end{enumerate}
\end{definition}

\begin{proposition} For any hypergraph $H$, there exists a hypergraph isomorphism $H(\Delta(H)) \cong H$ that sends each equivalence class of a maximal nondegenerate simplex to its underlying set.
\end{proposition}
\begin{proof}
Define a map $b : E \to E(H)$ by \[b([[v_{i_0},\cdots, v_{i_{\Vert e \Vert}}]_e]) \coloneqq e.\]
Let $e' \in E(H)$ with $f_H(e') = \{v_0, \cdots, v_{\Vert e' \Vert} \}$.
\begin{itemize}
    \item Surjectivity: $[[v_0, \cdots, v_{\Vert e' \Vert}]_{e'}] \in b^{-1}(e')$ implies $b$ is surjective.
    \item Injectivity: Suppose 
    \begin{equation}
    b([[v_{i_0}, \cdots, v_{i_{\Vert e' \Vert}}]_{e'}]) = b([[v_{j_0}, \cdots, v_{j_{\Vert e' \Vert}}]_{e'}])=e'.
    \end{equation}
    By definition of $b$, this implies 
    \begin{equation}
    \{v_{i_0}, \cdots, v_{i_{\Vert e' \Vert}} \} = \{v_{j_0}, \cdots, v_{j_{\Vert e' \Vert}} \}    
    \end{equation}
    and thus 
    \begin{equation}
    [[v_{i_0}, \cdots, v_{i_{\Vert e' \Vert}}]_{e'}] = [[v_{j_0}, \cdots, v_{j_{\Vert e' \Vert}}]_{e'}].    
    \end{equation}
    Hence $b$ is injective.
\end{itemize}
Direct computations show that
\begin{align*}
((\operatorname{Id})_* \circ f)([[v_{0},\cdots, v_{{\Vert e' \Vert}}]_{e'}])&= (f_H \circ b)([[v_{0},\cdots, v_{{\Vert e' \Vert}}]_{e'}])\\
&= \{v_{0}, \cdots, v_{{\Vert e' \Vert}}\}
\end{align*}
and
\begin{align*}
((\operatorname{Id}^{-1})_* \circ f_H)(e') &= (f \circ b^{-1})(e')\\
&=\{v_0, \cdots, v_{\Vert e' \Vert}\}.
\end{align*}

Therefore, $(\operatorname{Id}, b) : H(\Delta(H)) \to H$ is a hypergraph isomorphism.\hfill $\square$ \\

\end{proof}

This connection is particularly valuable, as $\Delta(H)$ possesses a hierarchical structure compared to the original hypergraph $H$. The enhanced complexity of $\Delta(H)$ allows for a more nuanced analysis of the underlying hypergraph structure without losing the original hypergraph information $H$.

In real-world hypergraphs, node features are often the only provided data. To leverage these features effectively, it is critical to explicitly compute the degree 0 sheaf Laplacian, which models how node features diffuse to adjacent nodes through the hypergraph structure. \\

\begin{proposition}Suppose $H=(V(H), E(H), f_H)$ is a hypergraph and $\mathcal{F}$ is a cellular sheaf on $\Delta(H)$. For a 0-cochain $\mathbf{x}$, the $[v]_v$-component of the degree 0 sheaf Laplacian is

\begin{equation}
\begin{split}
    &\sum_{\substack{\{w \mid w,v \in e \ \textrm{for} \ e\} \\ \{e \mid v,w \in e \}}}\mathcal{F}^*([v]_v \prec [v,w]_e)\mathcal{F}([v]_v \prec [v,w]_e)(x_{v})\label{Laplacianformula}\\
    & +\sum_{\substack{\{w \mid w,v \in e \ \textrm{for} \ e\} \\ \{e \mid v,w \in e \}}}\mathcal{F}^*([v]_v \prec [w,v]_e)\mathcal{F}([v]_v \prec [w,v]_e)(x_{v})\\
    & -\sum_{\substack{\{w \mid w,v \in e \ \textrm{for} \ e\} \\ \{e \mid v,w \in e \}}}\mathcal{F}^*([v]_v \prec [v,w]_e)\mathcal{F}([w]_w \prec [v,w]_e)(x_{w})\\
    & -\sum_{\substack{\{w \mid w,v \in e \ \textrm{for} \ e\} \\ \{e \mid v,w \in e \}}}\mathcal{F}^*([v]_v \prec [w,v]_e)\mathcal{F}([w]_w \prec [w,v]_e)(x_{w}).
\end{split}
\end{equation}
\end{proposition}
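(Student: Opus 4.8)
The plan is to specialize the general degree $k$ sheaf Laplacian formula~\eqref{uppersheaflaplacian} to the case $k=0$ and then perform the same sign bookkeeping as in Example~\ref{SheafLaplacianVsimplex}, but now with cofacets indexed by the hyperedges of $H$. First I would observe that a $0$-simplex has no facets in $\Delta(H)$, so no two $0$-simplices can share a common facet; consequently the lower-adjacency sum in~\eqref{uppersheaflaplacian} (the sum over common facets $\mu$) vanishes identically, and only the upper-adjacency sum over common cofacets $\tau$ survives. This reduces the entire computation to enumerating the cofacets of $[v]_v$ and tracking their signed incidences.

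Next I would enumerate those cofacets. Applying the facet maps of $\Delta(H)$ defined in~\eqref{identityforsymsSetfromhypergraph} to a $1$-simplex $[a,b]_e$ gives $d_0^1([a,b]_e) = [b]_b$ and $d_1^1([a,b]_e) = [a]_a$, so $[v]_v \unlhd [a,b]_e$ exactly when $v \in \{a,b\}$. Hence the cofacets of $[v]_v$ are precisely the $1$-simplices $[v,w]_e$ and $[w,v]_e$, where $(w,e)$ ranges over all pairs with $v,w \in f_H(e)$. The signed incidences follow immediately: from $[v]_v = d_1^1([v,w]_e)$ and $[v]_v = d_0^1([w,v]_e)$ I read off $[[v]_v : [v,w]_e] = -1$ and $[[v]_v : [w,v]_e] = +1$, while the remaining facets are $d_0^1([v,w]_e) = [w]_w$ with incidence $+1$ and $d_1^1([w,v]_e) = [w]_w$ with incidence $-1$.

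Then I would substitute these data into~\eqref{uppersheaflaplacian}, using the fact that the sign $(-1)^{[\sigma:\tau]+[\sigma':\tau]}$ equals the product $[\sigma:\tau]\,[\sigma':\tau]$ of the two signed incidences. For each cofacet $\tau \in \{[v,w]_e,\,[w,v]_e\}$ the simplices upper adjacent to $[v]_v$ through $\tau$ are $[v]_v$ itself and $[w]_w$. The diagonal contributions ($\sigma' = [v]_v$) carry coefficient $(\pm 1)^2 = +1$ and produce the first two sums; the off-diagonal contributions ($\sigma' = [w]_w$) carry coefficient $(-1)(+1) = -1$ for $\tau = [v,w]_e$ and $(+1)(-1) = -1$ for $\tau = [w,v]_e$, producing the last two sums. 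Collecting everything over the pairs $(w,e)$ with $v,w \in f_H(e)$ yields exactly the four sums in the statement.

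The main obstacle I anticipate is the enumeration step within the quotient structure: I must verify that the facet maps descend well-definedly to the equivalence classes defining $\Delta(H)$ and that every cofacet of $[v]_v$ is recorded with its correct provenance label $e$, since distinct hyperedges $e$ sharing the same pair $\{v,w\}$ contribute genuinely distinct cofacets $[v,w]_e$. A secondary point requiring care is the degenerate cofacet $[v,v]_v$ (the case $w=v$): its two facet maps both return $[v]_v$, so only a diagonal term arises and its coefficient $(\pm 1)^2 = +1$ is unambiguous, consistent with the index set in the statement. The remainder is routine sign bookkeeping paralleling Example~\ref{SheafLaplacianVsimplex}.
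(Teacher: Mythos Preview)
Your proposal is correct and follows essentially the same approach as the paper's proof: enumerate the cofacets $[v,w]_e$ and $[w,v]_e$ of $[v]_v$, record the signed incidences $[[v]_v:[v,w]_e]=-1$, $[[v]_v:[w,v]_e]=+1$, $[[w]_w:[v,w]_e]=+1$, $[[w]_w:[w,v]_e]=-1$, and substitute into~\eqref{uppersheaflaplacian}. Your additional remarks on the vanishing of the lower-adjacency sum, the well-definedness of the facet maps on the quotient, and the degenerate cofacet $[v,v]_v$ are more explicit than the paper's rather terse argument, but the underlying route is the same.
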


\begin{proof}
\begin{itemize}
    \item $[v]_v, [w]_w$ are upper adjacent whose cofacets are $[v,w]_e, [w,v]_e$ for any $e \in E(H)$ such that $v,w \in f_H(e)$. Moreover,
    \begin{enumerate}
        \item $[[v]_v : [v,w]_e] = [[w]_w : [w,v]_e]=-1$
        \item $[[w]_w : [v,w]_e] = [[v]_v : [w,v]_e]=+1$.
    \end{enumerate}
    \item $[v]_v, [v]_v$ are upper adjacent whose cofacets are $[v,w]_e, [w,v]_e$ for any $e \in E(H)$ such that $v,w \in f_H(e)$. Moreover,
    \begin{enumerate}
        \item $[(v)_V : (v,w)_V]=-1$
        \item $[(v)_V : (w,v)_V]=+1$.
    \end{enumerate}
\end{itemize}
The degree 0 sheaf Laplacian easily follows by \eqref{uppersheaflaplacian}.\hfill $\square$ \\
\end{proof}

Given a graph $G$, we have two sheaf Laplacians: \eqref{sheafLaplacianongraph} and \eqref{Laplacianformula}. The next proposition says the two are closely related. \\

\begin{proposition}\label{sheaflaplacianongraphandassociatedsymsset}Suppose $G$ is a graph and $\mathcal{F}$ is a cellular sheaf on $G$. Then $\mathcal{F}$ induces a cellular sheaf $\widehat{\mathcal{F}}$ on $\Delta(G)$ satisfying 
\begin{equation}
\mathcal{L}^0_{\widehat{\mathcal{F}}} = \mathcal{L}_{\mathcal{F}}.    
\end{equation}
\end{proposition}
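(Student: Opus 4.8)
The plan is to construct the induced sheaf $\widehat{\mathcal{F}}$ on $\Delta(G)$ explicitly, reduce the degree $0$ sheaf Laplacian on $\Delta(G)$ to the graph sheaf Laplacian up to a single global scalar, and then observe that this scalar is annihilated by the normalization.

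First I would define $\widehat{\mathcal{F}}$. Recall from Example~\ref{exsymssetfromhypergraph} that $\Delta(G)_0 \cong V(G)$ via $[v]_v \leftrightarrow v$, and that each edge $e \in E(G)$ with $f_G(e) = \{v, w\}$ contributes exactly the two nondegenerate $1$-simplices $[v,w]_e$ and $[w,v]_e$. I set $\widehat{\mathcal{F}}([v]_v) := \mathcal{F}(v)$ on $0$-simplices and $\widehat{\mathcal{F}}([v,w]_e) := \widehat{\mathcal{F}}([w,v]_e) := \mathcal{F}(e)$ on both orientations of each edge, and I take the restriction maps $\widehat{\mathcal{F}}([u]_u \unlhd \tau) := \mathcal{F}(u \unlhd e)$ for each vertex $u$ of an edge-simplex $\tau \in \{[v,w]_e, [w,v]_e\}$, independently of orientation. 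I would then observe that the compatibility conditions in Equation~\eqref{facemapidentity} become vacuous for a sheaf of degree $1$, since the iterated facets $d_j^{n-1}d_i^n$ they constrain first appear at dimension $n \geq 2$; hence $\widehat{\mathcal{F}}$ is a legitimate cellular sheaf of degree $1$, which is all that the degree $0$ sheaf Laplacian consumes.

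Next I would substitute these definitions into Proposition~\ref{Laplacianformula}. The structural fact specific to graphs is that $|f_G(e)| = 2$, so a pair $(w,e)$ with $v,w \in f_G(e)$ is exactly an edge $e$ incident to $v$ together with its unique opposite endpoint $w$; in particular no degenerate cofacet $[v,v]_{(\cdot)}$ ever enters the sums, precisely because the two endpoints are distinct. Each incident edge contributes through the two cofacets $[v,w]_e$ and $[w,v]_e$, and since $\widehat{\mathcal{F}}$ assigns them the same stalk $\mathcal{F}(e)$ and the same restriction maps, the two cofacets yield identical summands. Thus the diagonal terms of Proposition~\ref{Laplacianformula} collapse to $2\sum_{e : v \unlhd e}\mathcal{F}^*(v\unlhd e)\mathcal{F}(v \unlhd e)(x_v)$ and the off-diagonal terms to $-2\sum \mathcal{F}^*(v \unlhd e)\mathcal{F}(w \unlhd e)(x_w)$, which are exactly twice the terms of Equations~\eqref{sheafLaplacianongraph} and~\eqref{diagonalblockofsheafLaplacianongraph}. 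Under the identification $\Delta(G)_0 \cong V(G)$ this yields the operator identities $L^0_{\widehat{\mathcal{F}}} = 2L_{\mathcal{F}}$ and $D^0_{\widehat{\mathcal{F}}} = 2D_{\mathcal{F}}$.

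Finally I would feed these into the normalized Laplacian of Equations~\eqref{normalizedsheafLaplacianongraph} and~\eqref{normalizedsheafLaplacian}. Since $D^0_{\widehat{\mathcal{F}}} = 2D_{\mathcal{F}}$ is a positive scalar multiple of $D_{\mathcal{F}}$, we have $(D^0_{\widehat{\mathcal{F}}})^{-1/2} = 2^{-1/2}(D_{\mathcal{F}})^{-1/2}$, so that
$$\mathcal{L}^0_{\widehat{\mathcal{F}}} = (2D_{\mathcal{F}})^{-\frac{1}{2}}(2L_{\mathcal{F}})(2D_{\mathcal{F}})^{-\frac{1}{2}} = D_{\mathcal{F}}^{-\frac{1}{2}}L_{\mathcal{F}}D_{\mathcal{F}}^{-\frac{1}{2}} = \mathcal{L}_{\mathcal{F}},$$
the three factors of $2$ canceling as $2^{-1/2}\cdot 2\cdot 2^{-1/2} = 1$. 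I expect the main obstacle to be bookkeeping rather than conceptual: one must verify that the doubling is \emph{uniform} across every summand, so that it factors out as a genuine global scalar rather than rescaling individual blocks, and one must confirm the absence of stray degenerate cofacets in Proposition~\ref{Laplacianformula}. The conceptual heart of the statement is that recording both orientations doubles the unnormalized operator but leaves the normalized operator invariant, which is exactly why the equivalence is asserted at the level of normalized Laplacians.
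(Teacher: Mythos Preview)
Your proposal is correct and follows essentially the same approach as the paper: define $\widehat{\mathcal{F}}$ so that both orientations $[v,w]_e$ and $[w,v]_e$ carry the same stalk $\mathcal{F}(e)$ and the same restriction maps, invoke Proposition~\ref{Laplacianformula} to see that every edge contributes twice, deduce $L^0_{\widehat{\mathcal{F}}} = 2L_{\mathcal{F}}$ and $D^0_{\widehat{\mathcal{F}}} = 2D_{\mathcal{F}}$, and cancel the scalar under normalization. You are in fact slightly more careful than the paper on two points it leaves implicit---that the compatibility condition~\eqref{facemapidentity} is vacuous in degree $1$, and that degenerate cofacets $[v,v]_{(\cdot)}$ do not interfere---but these are refinements of the same argument rather than a different route.
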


\begin{proof}
Define a cellular sheaf $\widehat{\mathcal{F}}$ on $\Delta(G)$ as follows. First, stalks are defined by
\begin{equation}
\widehat{\mathcal{F}}([v_{i_0}, \cdots, v_{i_n}]_e) \coloneqq \begin{cases}
\mathcal{F}(v) & \text{if $\{v_{i_0}, \cdots, v_{i_n} \}=\{v\}$ } \\
\mathcal{F}(e) & \text{otherwise.} 
\end{cases}
\end{equation}

Linear map $\widehat{\mathcal{F}}([v_{i_0}, \cdots,\widehat{v_{i_l}} ,\cdots, v_{i_n}]_e \prec [v_{i_0}, \cdots, v_{i_n}]_e)$ is defined by
\begin{equation}
\begin{cases}
  \mathcal{F}(v \in e) & \text{if $\{v_{i_0}, \cdots,\widehat{v_{i_l}} ,\cdots, v_{i_n}\} \neq \{v_{i_0}, \cdots, v_{i_n} \}$ }
  \\
  \operatorname{Id} & \text{otherwise.}
\end{cases}    
\end{equation}

\begin{equation}
C^0(\Delta(G), \widehat{\mathcal{F}}) = C^0(G, \mathcal{F}) = \underset{{v \in V(G)}}{\oplus}\mathcal{F}(v)
\end{equation}
implies $L^0_{\widehat{\mathcal{F}}},D^0_{\widehat{\mathcal{F}}} , L_{\mathcal{F}}, D_{\mathcal{F}}$ are linear maps from $C^0(G, \mathcal{F})$ to $C^0(G, \mathcal{F})$. When $v \in e$, there are two cofacets $[v,w]_e, [w,v]_e$ of $[v]_v$. Equations  \eqref{sheafLaplacianongraph}, \eqref{normalizedsheafLaplacianongraph},\eqref{upperdiagonalizedsheaflaplacian}, and \eqref{Laplacianformula} imply
\begin{equation}
L^0_{\widehat{\mathcal{F}}} = 2L_{\mathcal{F}}, D^0_{\widehat{\mathcal{F}}} = 2D_{\mathcal{F}}.    
\end{equation} 
Therefore,
\begin{equation}
\begin{split}
    \mathcal{L}^0_{\widehat{\mathcal{F}}} &=  (D^0_{\widehat{\mathcal{F}}})^{-\frac{1}{2}}L^0_{\widehat{\mathcal{F}}}(D^0_{\widehat{\mathcal{F}}})^{-\frac{1}{2}} \\
    & = (2D_{{\mathcal{F}}})^{-\frac{1}{2}}(2L_{{\mathcal{F}}})(2D_{{\mathcal{F}}})^{-\frac{1}{2}} \\
    &= (D_{{\mathcal{F}}})^{-\frac{1}{2}}(L_{{\mathcal{F}}})(D_{{\mathcal{F}}})^{-\frac{1}{2}} \\
    &= \mathcal{L}_{\mathcal{F}}
\end{split}
\end{equation}
and it finishes the proof.\hfill$\square$ \\
\end{proof}

Proposition \ref{sheaflaplacianongraphandassociatedsymsset} implies the normalized degree $k$ sheaf Laplacian on $\Delta(G)$ is a generalization of the normalized sheaf Laplacian on $G$.

\subsection{Neural sheaf diffusion on hypergraph}
In this subsection, we develop a degree $k$ NSD model on a symmetric simplicial set using the normalized degree $k$ sheaf Laplacian. By instantiating this framework on symmetric simplicial lifting of hypergraphs, we derive HNSD, enabling structure-aware learning on higher-order networks.

Let $X$ be a symmetric simplicial set, $\mathcal{F}$ be a cellular sheaf on $X$ of degree $k+1$ such that $\mathcal{F}(\sigma)=\mathbb{R}^d$ for any simplex $\sigma$. Let $\mathbf{x}_1, \cdots, \mathbf{x}_f$ be $k$-cochains and $\mathbf{X} \in \mathbb{R}^{|X_k|d \times f}$ be a matrix whose $i$th column is $\mathbf{x}_i$ for $1 \leq i \leq f$. Degree $k$ sheaf diffusion of $\mathbf{X}$ on $(X, \mathcal{F})$ is time dependent matrices $\mathbf{X}(t)$ describing the diffusion of $\mathbf{X}$ along $\mathcal{F}$ governed by
\begin{equation}\label{sheafdiffusion}
\mathbf{X}(0) = \mathbf{X}, \ \dot{\mathbf{X}}(t) = -\mathcal{L}^k_{\mathcal{F}}(\mathbf{X}(t)).   
\end{equation}
Equation \eqref{sheafdiffusion} is discretized via the explicit Euler scheme with unit step-size
\begin{equation}
\mathbf{X}_{t+1}=\mathbf{X}_t- \mathcal{L}^k_{\mathcal{F}}(\mathbf{X}_t).    
\end{equation}
A general layer of degree $k$ NSD on $X$ is defined by
\begin{equation}\label{NSDonsymsSet}
\mathbf{X}_{t+1}=\mathbf{X}_t - \alpha\left(\mathcal{L}^k_{\mathcal{F}(t)}(\mathbf{Id} \otimes \mathbf{W}^t_1)\mathbf{X}_t\mathbf{W}^t_2 \right)    
\end{equation}
where
\begin{itemize}
    \item $t \in \mathbb{N}$ is layer
    \item $\alpha$ is nonlinear function
    \item $\mathbf{W}^1_t \in \mathbb{R}^{d \times d}, \mathbf{W}^2_t \in \mathbb{R}^{f_t \times f_{t+1}}$ are learnable matrices at layer $t$
    \item $\mathcal{F}(t)$ is cellular sheaf on $X$ at layer $t$.
\end{itemize}

The degree $k$ NSD on symmetric simplicial sets generalizes NSD on graphs~\cite{bodnar2022neural} through Proposition~\ref{sheaflaplacianongraphandassociatedsymsset}. For a hypergraph $H$ with symmetric simplicial lifting $\Delta(H)$, we learn the cellular sheaf $\mathcal{F}$ on $\Delta(H)$ of degree 1 via

\begin{equation}\label{sheaflearning}
\mathcal{F}([v]_v \prec [v,w]_e) = \operatorname{MLP}(x_v \parallel x_{[v,w]_e})
\end{equation}
where 
\begin{equation}
x_{[v,w]_e} = \alpha'\left( M \alpha \left( W^T \left[ \begin{array}{c} x_{v} \\ 1 \end{array} \right] \odot \left[ \begin{array}{c} x_{w} \\ 1 \end{array} \right] \right) \right)   
\end{equation}
with $\alpha = \operatorname{ReLU}$, $\alpha' = \tanh$~\cite{hua2022high}. \\

The HNSD architecture on $H$ is defined as the degree 0 NSD on $\Delta(H)$ with sheaf learning governed by~\eqref{sheaflearning}. This formulation preserves the theoretical guarantees of NSD while enabling structural learning on higher-order relations through the symmetric simplicial set framework.

\section{Numerical experiments}
\label{sec:experiments}

In this section, we present the experimental results and provide a comprehensive analysis of the performance achieved by the proposed method. To thoroughly assess its effectiveness, evaluations were conducted across five diverse real-world datasets.

\subsection{Experiment settings}
\subsubsection{Dataset}

In our experiments, we utilized five real-world datasets. The Cora and Citeseer datasets are citation networks, while the Cora-CA and DBLP-CA datasets are co-authorship networks~\cite{yadati2019hypergcn}. Citation datasets capture relationships between scientific publications based on citation patterns, where nodes represent individual papers and hyperedges link groups of papers that are co-cited by another publication. Similarly, the co-authorship datasets reflect relationships between papers based on shared authorship. Here, nodes represent individual papers, and hyperedges connect papers authored by the same researcher(s). Citation datasets form small hyperedges, with average sizes around 3 and maximum sizes limited to 5–26, resulting in shallow and less informative group contexts. In contrast, co-authorship datasets have larger hyperedges, averaging 4–5 nodes and reaching up to 43 or even 202, providing richer and broader relational information. Additionally, we used the Senate datasets~\cite{fowler2006legislative}, which represent legislative cosponsorship networks within the United States Congress. These datasets model the relationships between legislators and the bills they sponsor or cosponsor. In this case, nodes represent individual senators, while hyperedges denote legislative bills that link sponsors to their cosponsors. Each node contains attributes indicating the legislator's political party affiliation. Detailed characteristics of each dataset are summarized in~\autoref{tab:datasets}. 

\begin{table}[H]
\centering
\caption{Dataset statistics. CE homophily shows the homophily score~\cite{pei2020geom} computed from clique expansion, while Avg. HE size denotes the average hyperedge size.}

\label{tab:datasets}
\begin{tabular}{cccccc} 
\toprule
                    & Cora  & Citeseer & Cora-CA & DBLP-CA  & Senate  \\ 
\midrule
\# nodes       & 2708  & 3312     & 2708    & 41302   & 282     \\
\# hyperedges       & 1579  & 1079     & 1072    & 22363    & 315     \\
\# classes          & 7     & 6        & 7       & 6      & 2       \\
Avg. HE size & 3.03  & 3.20    & 4.28   & 4.45 & 17.17  \\
Max. HE size & 5  & 26    & 43   & 202 & 31  \\
CE Homophily        & 0.897 & 0.893    & 0.803   & 0.869  & 0.498   \\
\bottomrule
\end{tabular}
\end{table}

\subsubsection{Task and evaluation}

\begin{table*}[ht]
\centering
\caption{Node classification accuracy on five benchmark hypergraph datasets.}
\label{tab:results}
\resizebox{0.9\textwidth}{!}{
\begin{tabular}{c|ccccc|c} 
\toprule
            & \textbf{Cora}         & \textbf{Citeseer}     & \textbf{Cora-CA}      & \textbf{DBLP-CA}      & \textbf{Senate}        & \textbf{AVG.}   \\ 
\midrule
CEGCN       & 75.32 ± 1.69          & 71.43 ± 1.34          & 76.68 ± 1.30          & 87.19 ± 0.30          & 48.17 ± 3.68          & 71.74           \\
HNHN        & 76.36 ± 1.92          & 72.64 ± 1.57          & 77.19 ± 1.49          & 86.78 ± 0.29          & 50.85 ± 3.35          & 72.76           \\
LEGCN       & 72.23 ± 1.60          & 71.84 ± 1.17          & 72.23 ± 1.60          & 84.26 ± 0.40          & 73.24 ± 10.29         & 74.76           \\
HCoN & 51.77 ± 2.23          & 43.48 ± 1.12          & 72.37 ± 1.08         & 89.98 ± 0.26        & 46.28 ± 4.66         & 60.78           \\
HyperGCN    & 74.19 ± 1.41          & 69.42 ± 3.49          & 70.00 ± 3.74          & 86.78 ± 2.39          & 53.66 ± 6.35          & 70.78           \\
AllDeepSets & 76.88 ± 1.80          & 70.83 ± 1.63          & 81.97 ± 1.50          & 91.27 ± 0.27          & 48.17 ± 5.67          & 73.82           \\
SheafHyperGNN    & 77.80 ± 2.24          & 73.93 ± 1.06          & 81.65 ± 1.50          & 88.93 ± 0.66          & 74.65 ± 5.90          & 79.39           \\
\midrule
HNSD(OURS)  & 79.28 ± 0.82 & 74.40 ± 1.47 & 82.58 ± 1.15 & 89.85 ± 0.44 & 78.45 ± 5.87 & 80.91  \\
\bottomrule
\end{tabular}
}
\end{table*}

Our experiment focuses on node classification in hypergraphs. Each node in the hypergraph is assigned a label corresponding to one of several classes, depending on the characteristics of the dataset. The goal of our task is to predict the labels of unknown nodes based on the given hypergraph structure. To achieve this, we partition each dataset into training, validation, and test sets with a ratio of 0.5:0.25:0.25. The neural network is first trained using the training dataset. The validation dataset is used to monitor the model’s performance, and the final test accuracy is measured at the epoch where the validation error is minimized. The evaluation metric used in our experiment is accuracy, which represents the proportion of correctly classified nodes among all tested nodes. To ensure the reliability of our results, each experiment is conducted 10 times, and we report the average accuracy across these runs.

\subsubsection{Baseline}
In our evaluation, we compared our approach against several established baseline methods. CEGCN integrates the clique expansion (CE) ~\cite{agarwal2005beyond, zhou2006learning} technique with GCNs. It transforms hypergraphs into standard graphs by converting each hyperedge into a fully connected clique, upon which conventional GCNs are applied. HNHN~\cite{dong2020hnhn} is a hypergraph neural network that alternates between updating hyperedge and node representations using a normalization scheme, thereby effectively capturing higher-order dependencies inherent in hypergraph structures. LEGCN~\cite{yang2022semi} incorporates label embeddings into the graph convolutional framework by transforming hypergraphs into standard graphs where node features are augmented with label information, enhancing the representation of relational structures. HCoN~\cite{wu2022hypergraph} jointly learns node and hyperedge embeddings by aggregating information from both sides and employs a reconstruction loss to preserve the original hypergraph structure. HyperGCN~\cite{yadati2019hypergcn} extends GCNs to the hypergraph domain by introducing a hypergraph Laplacian and reducing hyperedges into weighted pairwise edges in a selective manner, enabling the model to retain essential high-order relationships. AllDeepSets~\cite{chien2022you} models hypergraph message passing through two permutation-invariant multiset functions, both instantiated via Deep Sets. These functions, implemented as multilayer perceptrons (MLPs) with summation-based aggregation, provide universal approximation capabilities for multiset functions. SheafHyperGNN~\cite{duta2023sheaf} builds upon sheaf theory to enrich graph neural networks by incorporating both local and global contexts through a hypergraph-based framework. It introduces a sheaf-theoretic linear diffusion mechanism to effectively model higher-order interactions within hypergraphs.

\subsubsection{Implementation Details}
All experiments were conducted on an NVIDIA GeForce RTX 3090 GPU with 24GB of memory and 10,496 CUDA cores. Due to the larger memory requirements of the DBLP CA dataset, experiments for this dataset utilized an NVIDIA RTX A6000 GPU equipped with 48GB of memory and 10,752 CUDA cores. Our proposed model and baseline methods were implemented using PyTorch and executed within a CUDA-enabled environment.

All models were optimized using the Adam optimizer, with a default initial learning rate of $0.001$ and a weight decay of $1\times10^{-5}$ unless otherwise noted. A learning rate scheduler, specifically PyTorch's \texttt{ReduceLROnPlateau}, was employed to halve the learning rate whenever the validation loss did not decrease for five consecutive epochs. Training continued for a maximum of 500 epochs, incorporating early stopping if the validation accuracy did not improve by at least $10^{-4}$ for 10 consecutive epochs. The parameters of the model corresponding to the epoch with the highest validation accuracy were retained for evaluation.

To determine optimal hyperparameters, we conducted a comprehensive grid search, systematically exploring learning rates \{0.01, 0.005, 0.001\}, weight decay values \{$10^{-4}$, $10^{-5}$, $10^{-6}$\}, dropout rates \{0.4, 0.5, 0.7\}, and hidden layer dimensions \{64, 128\}. For each dataset, the hyperparameters yielding the highest average validation accuracy over 10 random train/validation/test splits were selected. The optimal hyperparameters for the proposed HNSD model varied per dataset as follows: for Cora, learning rate of $0.001$, weight decay of $1\times10^{-5}$, hidden dimension of 128, and dropout rate of 0.5; for Citeseer, learning rate of $0.001$, weight decay of $1\times10^{-4}$, hidden dimension of 128, and dropout rate of 0.7; for Cora CA, learning rate of $0.001$, weight decay of $1\times10^{-5}$, hidden dimension of 128, and dropout rate of 0.5; for DBLP CA, learning rate of $0.001$, weight decay of $1\times10^{-5}$, hidden dimension of 128, and dropout rate of 0.4; and for Senate, learning rate of $0.01$, weight decay of $1\times10^{-6}$, hidden dimension of 128, and dropout rate of 0.4.

Baseline methods were trained using publicly available recommended hyperparameters. In instances where complete hyperparameter specifications were unavailable, we applied the same grid search methodology detailed above to maintain fairness and consistency in comparisons.

\subsection{Results and Analysis}

\begin{table*}[!]
\centering
\caption{Ablation study results.}
\label{tab:ablation}
\resizebox{0.8\textwidth}{!}{
\begin{tabular}{lcccc}
\toprule
\textbf{Setting} & \textbf{Cora} & \textbf{Citeseer} & \textbf{Cora-CA} & \textbf{DBLP-CA} \\
\midrule
No Sheaf Learning & 32.41 $\pm$ 1.68 & 26.79 $\pm$ 2.41 & 34.62 $\pm$ 1.80 & 35.33 $\pm$ 0.37 \\
No Sheaf Diffusion & 73.59 $\pm$ 1.55 & 71.92 $\pm$ 1.24 & 73.59 $\pm$ 1.55 & 84.21 $\pm$ 0.22 \\
Base HNSD & 78.21 $\pm$ 1.23 & 73.79 $\pm$ 1.49 & 81.57 $\pm$ 1.80 & 89.43 $\pm$ 0.66 \\
+ Dynamic Sheaf & 78.21 $\pm$ 0.82 & 73.78 $\pm$ 1.80 & 82.35 $\pm$ 1.11 & 89.87 $\pm$ 0.38 \\
+ Sheaf Left Projection & 78.66 $\pm$ 0.78 & 74.11 $\pm$ 1.48 & 82.04 $\pm$ 1.49 & 89.97 $\pm$ 0.37 \\
+ All (Final Model) & 79.28 $\pm$ 0.82 & 74.40 $\pm$ 1.47 & 82.58 $\pm$ 1.15 & 89.85 $\pm$ 0.44 \\
\bottomrule
\end{tabular}
}
\end{table*}

\subsubsection{Overall Performance Comparison}
Table~\ref{tab:results} presents the classification accuracy of the proposed HNSD model across five benchmark datasets—Cora, Citeseer, Cora-CA, DBLP-CA, and Senate—spanning citation, co-authorship, and legislative cosponsorship networks. HNSD is evaluated alongside several competitive baselines, with accuracy serving as the primary metric for assessing node classification performance.

On citation networks (Cora and Citeseer), HNSD achieves marginal yet consistent improvements of 1.48\% and 0.47\%, respectively, over the best-performing baselines. These results indicate that HNSD effectively captures fine-grained dependencies among co-cited nodes by leveraging its sheaf-based diffusion mechanism.

On the Cora-CA dataset, HNSD achieves the highest performance, surpassing AllDeepSets by 0.61\%. On DBLP-CA, while HNSD remains competitive, it lags slightly behind AllDeepSets by 1.42\%. Notably, HCoN demonstrates strong performance on DBLP-CA but underperforms on citation datasets. This contrast highlights inherent structural differences between co-authorship and citation hypergraphs: co-authorship hypergraphs often contain larger hyperedges, enabling more coherent relational modeling, whereas citation-based hypergraphs typically comprise smaller hyperedges, potentially introducing noise during message aggregation. These structural characteristics likely account for the marked performance degradation of HCoN on citation datasets.

The most pronounced improvement is reported on the Senate dataset, where HNSD exceeds SheafHyperGNN by 3.80\%. Given the heterogeneity of node attributes and the sparsity of connections in this dataset, this result highlights the model's robustness in challenging, real-world scenarios.

Overall, HNSD achieves the highest average accuracy of 80.91\%, outperforming SheafHyperGNN and AllDeepSets by margins of 1.52\% and 7.09\%, respectively. These consistent improvements across diverse domains validate the effectiveness of incorporating sheaf-theoretic structures with node-adaptive diffusion.

\subsubsection{Comparison with SheafHyperGNN}
A key innovation of HNSD lies in its rigorous mathematical grounding. While SheafHyperGNN introduces sheaf-based diffusion for hypergraph learning, our method advances this approach by constructing a sheaf Laplacian on hypergraphs formalized as symmetric simplicial sets. This abstraction offers a principled framework that more faithfully captures higher-order relationships.

This enhanced theoretical foundation directly translates into empirical gains. Across all five datasets, HNSD consistently outperforms SheafHyperGNN, including a significant margin of 3.80\% on the Senate dataset. These results underscore the practical benefits of our design, demonstrating that a formally structured sheaf-based framework yields measurable performance advantages.

\subsubsection{Strength in Heterophilic Hypergraphs} 
The Senate dataset is a prime example of a heterophilic hypergraph, where nodes connected by hyperedges typically exhibit highly dissimilar attributes. This low CE homophily value indicates that, unlike traditional graphs where connected nodes tend to share similar features, the relationships between connected nodes in the Senate dataset are more complex and varied. The dataset contains political entities with diverse attributes (e.g., political affiliations and voting behaviors), and the connections between them are sparse, making it challenging for conventional models that assume homophily.

In this context, HNSD excels by leveraging a sheaf-based diffusion mechanism that is particularly well-suited for heterophilic settings. HNSD dynamically learns the sheaf stalks through an MLP layer, enabling the model to incorporate high-order structural information into the node representations. This allows the model to capture intricate relationships between nodes and hyperedges, even when the nodes themselves are dissimilar in attributes. The restriction map within the sheaf framework further enhances information propagation, ensuring that even dissimilar nodes within the same hyperedge can influence each other effectively.

Thus, HNSD's ability to learn node-sensitive dynamics and model higher-order relationships provides a significant advantage in the Senate dataset. By integrating sheaf-theoretic principles with node-specific diffusion, HNSD outperforms existing methods in handling heterophilic interactions, leading to a 3.80\% performance boost over the second-best model, SheafHyperGNN. This demonstrates that HNSD is particularly adept at handling real-world hypergraphs with irregular topologies and heterogeneous attributes, making it a strong contender for complex datasets like the Senate dataset.

\begin{figure}[!]
    \centering
    \includegraphics[width=\linewidth]{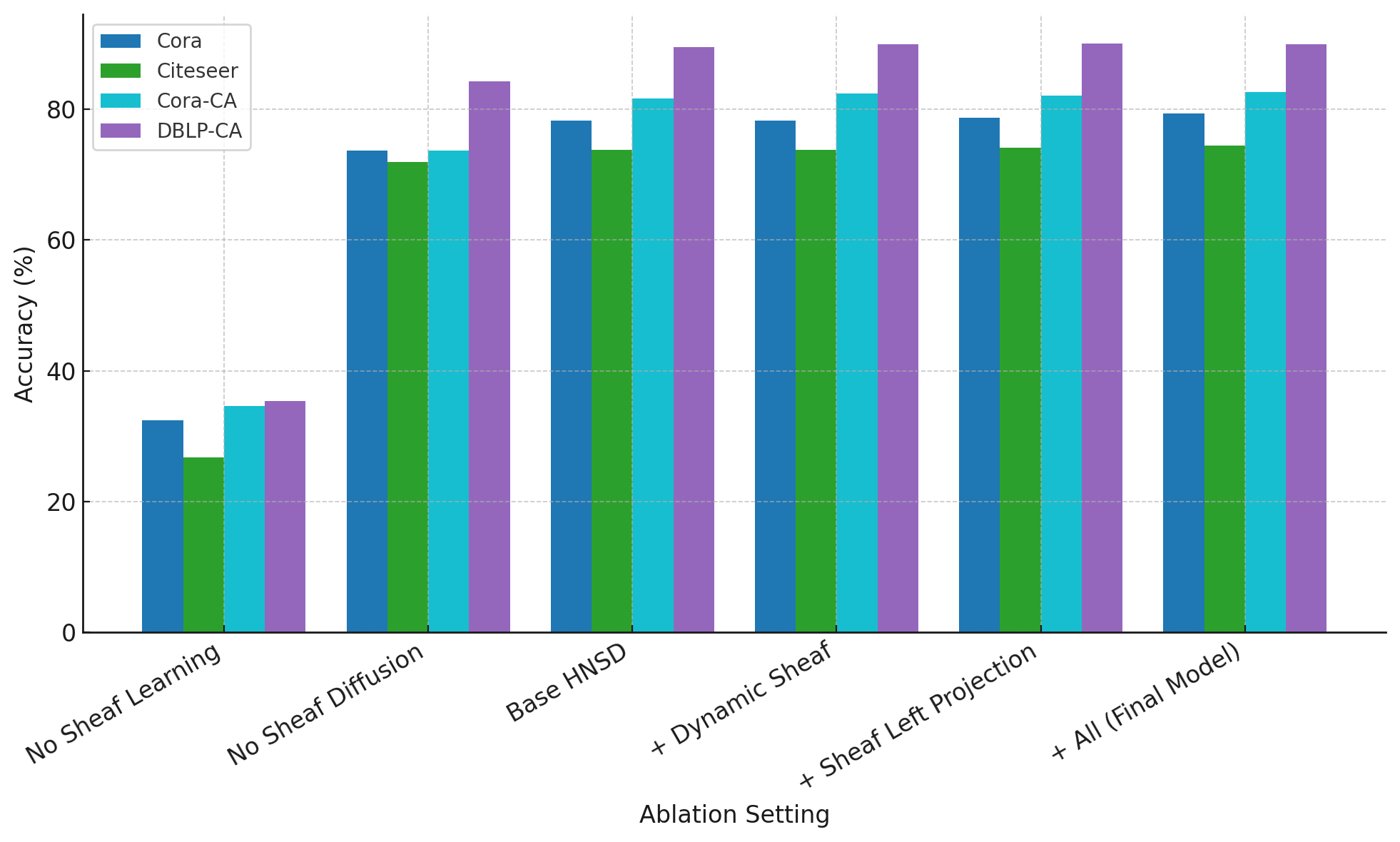}
    \caption{Ablation study results.}
\label{fig:tsneg}
\end{figure}

\begin{figure*}[!]
\centering
\begin{tabular}{ccc}

\begin{subfigure}[b]{0.23\textwidth}
    \includegraphics[width=\linewidth]{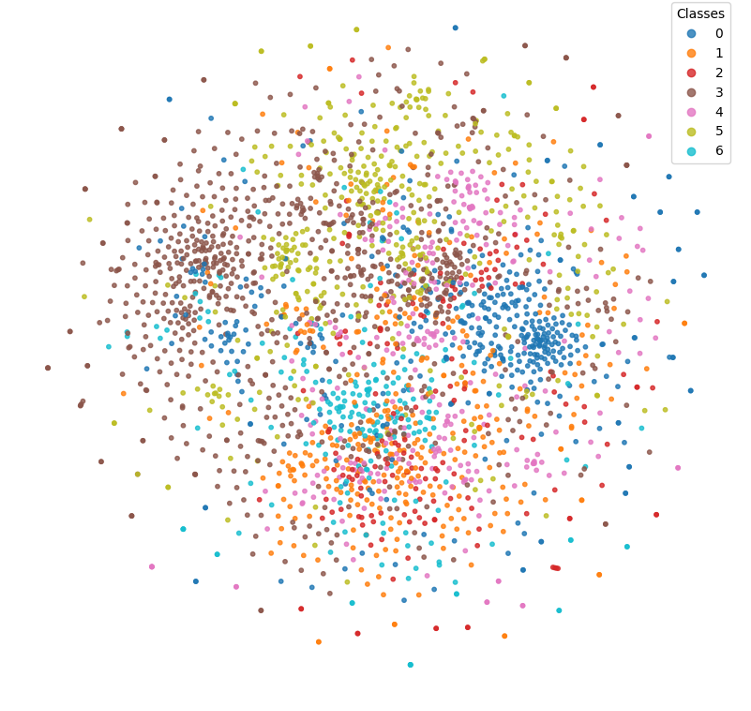}
    \caption{Cora - Raw.}
\end{subfigure} &
\begin{subfigure}[b]{0.23\textwidth}
    \includegraphics[width=\linewidth]{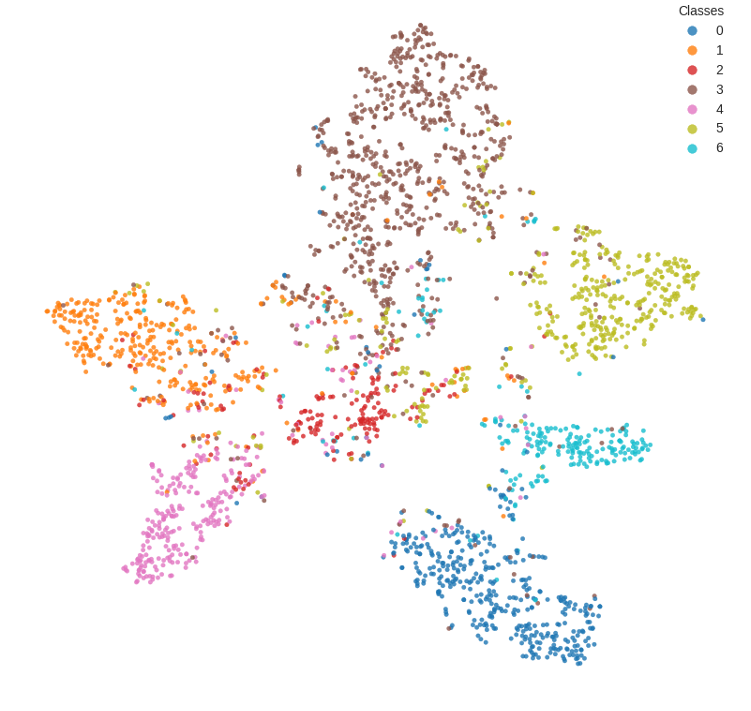}
    \caption{Cora - HyperGCN.}
\end{subfigure} &
\begin{subfigure}[b]{0.23\textwidth}
    \includegraphics[width=\linewidth]{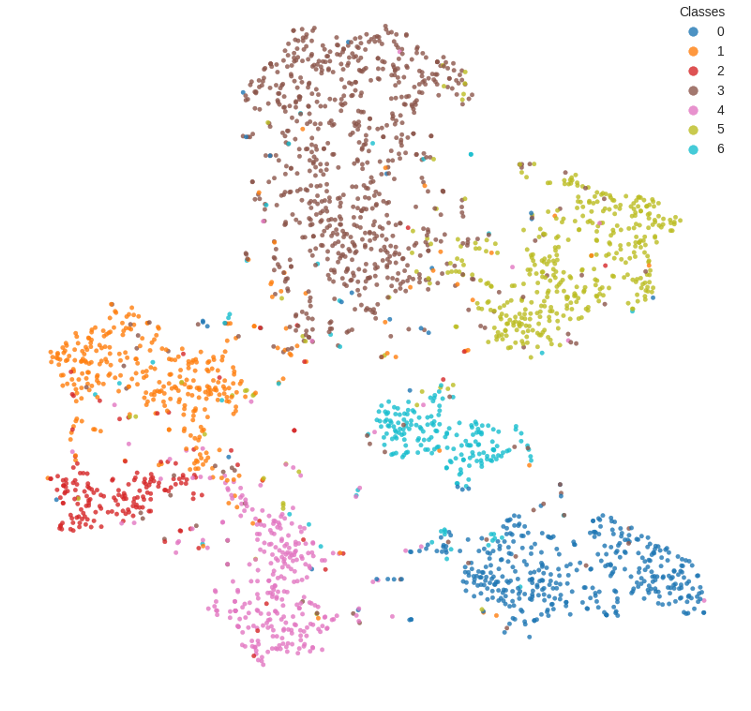}
    \caption{Cora - HNSD.}
\end{subfigure} \\[1ex]

\begin{subfigure}[b]{0.23\textwidth}
    \includegraphics[width=\linewidth]{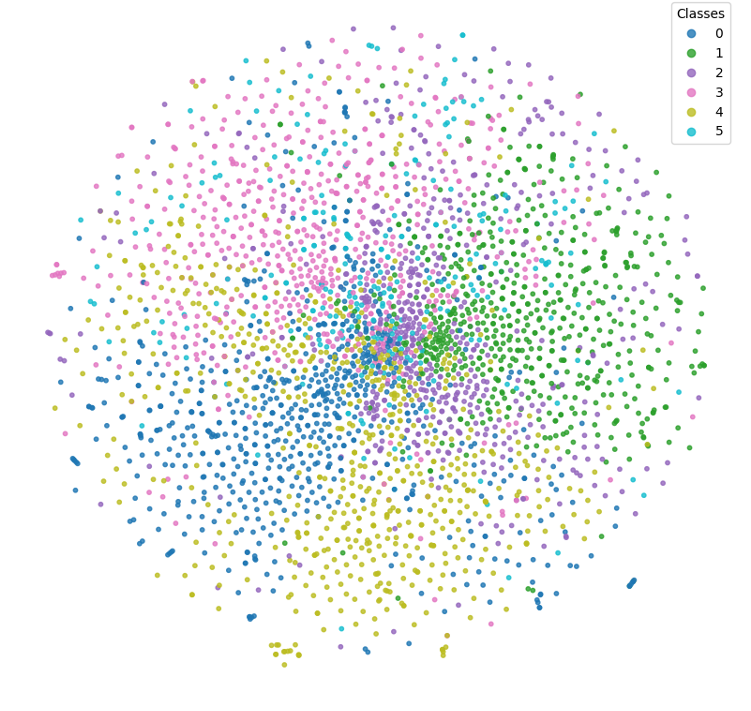}
    \caption{Citeseer - Raw.}
\end{subfigure} &
\begin{subfigure}[b]{0.23\textwidth}
    \includegraphics[width=\linewidth]{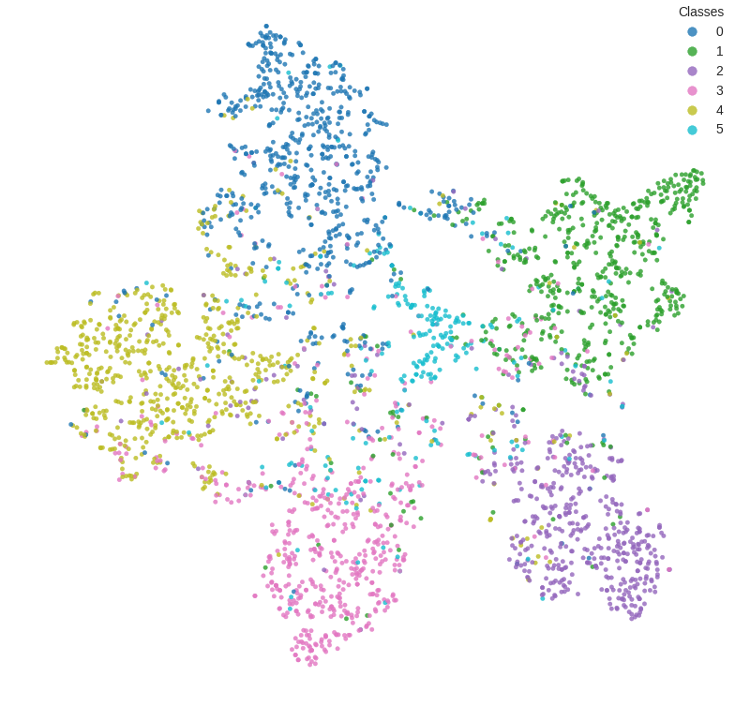}
    \caption{Citeseer - HyperGCN.}
\end{subfigure} &
\begin{subfigure}[b]{0.23\textwidth}
    \includegraphics[width=\linewidth]{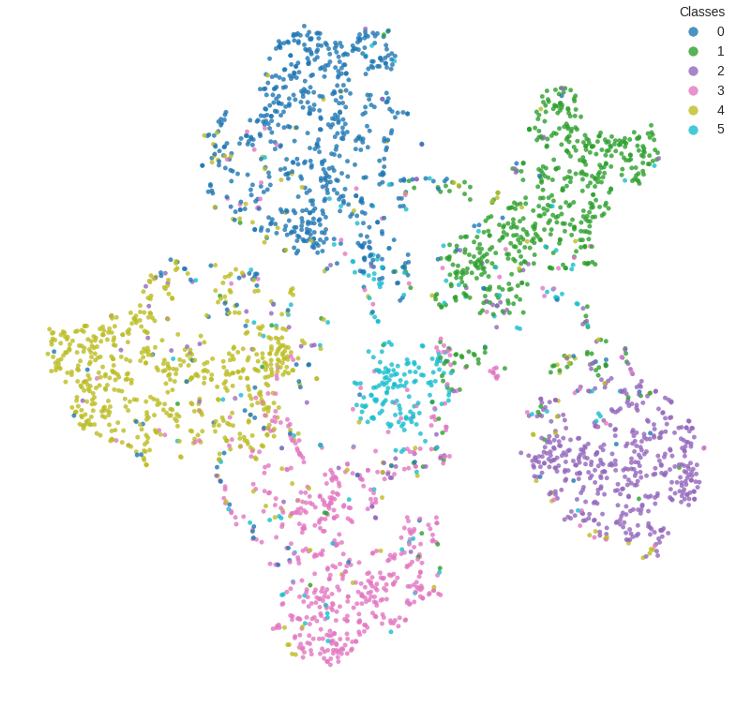}
    \caption{Citeseer - HNSD.}
\end{subfigure} \\[1ex]

\begin{subfigure}[b]{0.23\textwidth}
    \includegraphics[width=\linewidth]{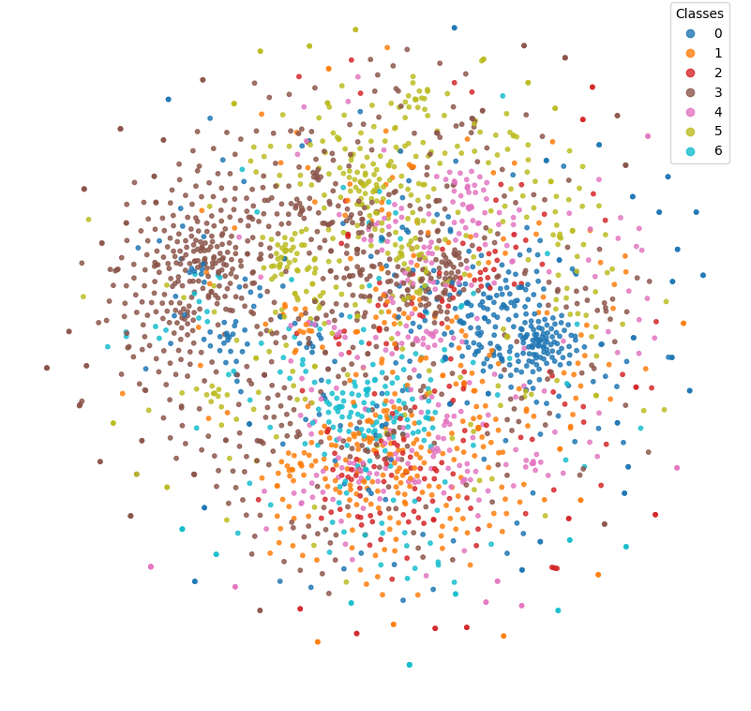}
    \caption{Cora-CA - Raw.}
\end{subfigure} &
\begin{subfigure}[b]{0.23\textwidth}
    \includegraphics[width=\linewidth]{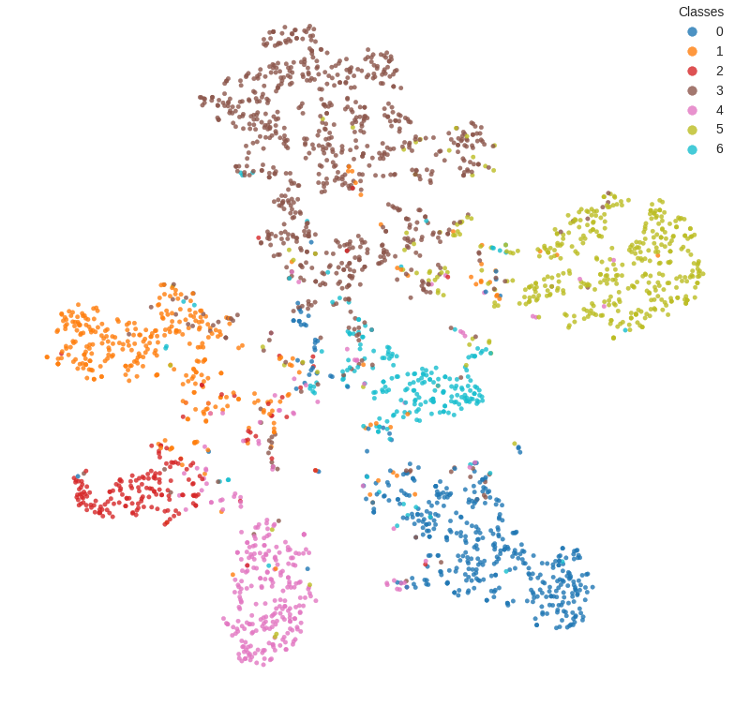}
    \caption{Cora-CA - HyperGCN.}
\end{subfigure} &
\begin{subfigure}[b]{0.23\textwidth}
    \includegraphics[width=\linewidth]{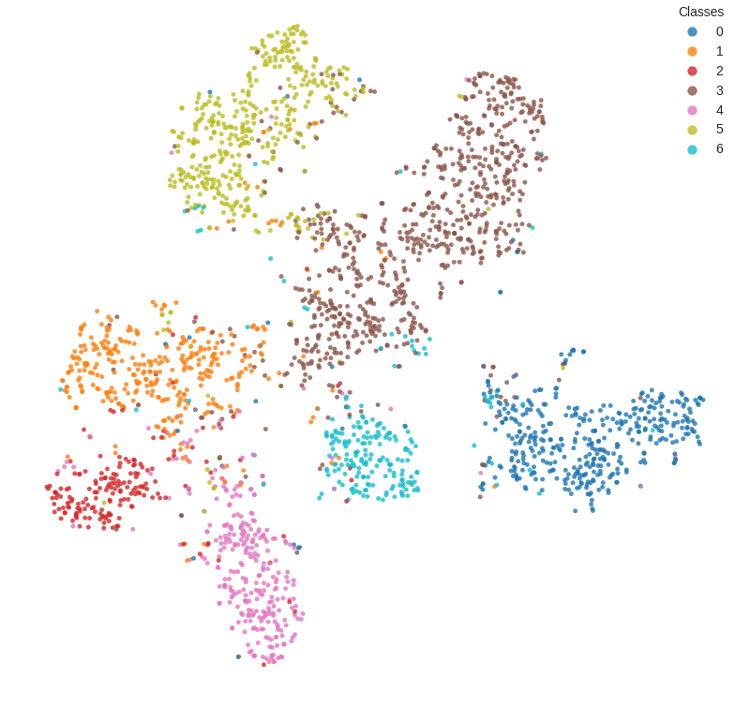}
    \caption{Cora-CA - HNSD.}
\end{subfigure} \\[1ex]

\begin{subfigure}[b]{0.23\textwidth}
    \includegraphics[width=\linewidth]{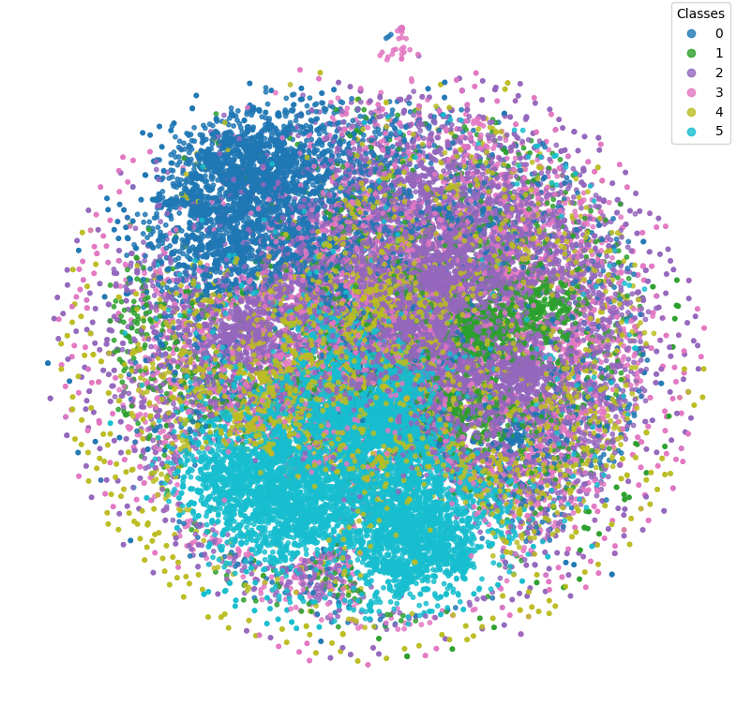}
    \caption{DBLP-CA - Raw.}
\end{subfigure} &
\begin{subfigure}[b]{0.23\textwidth}
    \includegraphics[width=\linewidth]{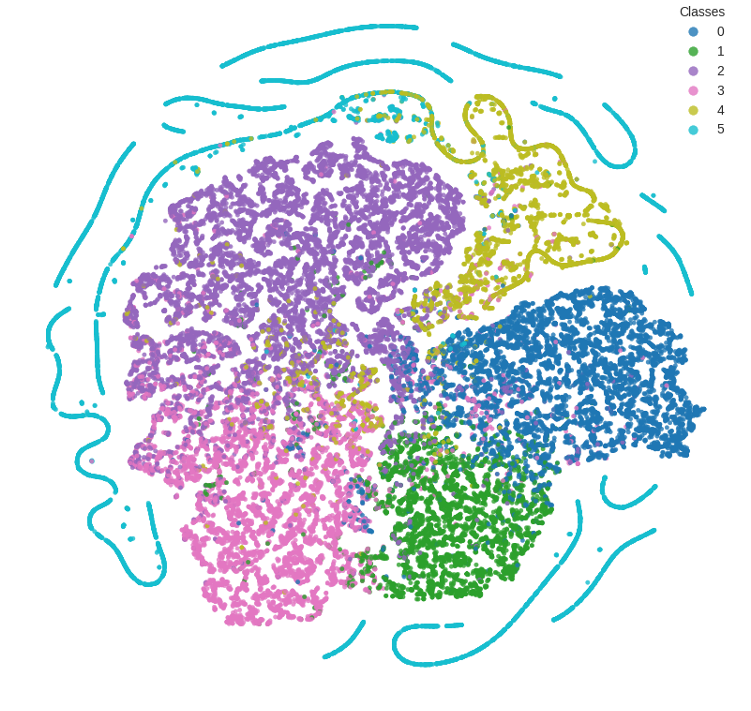}
    \caption{DBLP-CA - HyperGCN.}
\end{subfigure} &
\begin{subfigure}[b]{0.23\textwidth}
    \includegraphics[width=\linewidth]{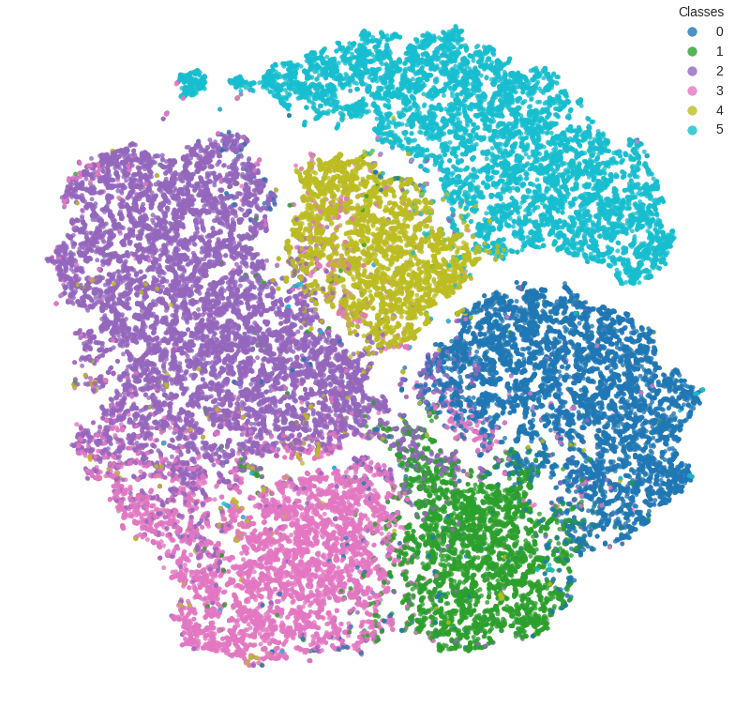}
    \caption{DBLP-CA - HNSD.}
\end{subfigure} \\[1ex]

\begin{subfigure}[b]{0.23\textwidth}
    \includegraphics[width=\linewidth]{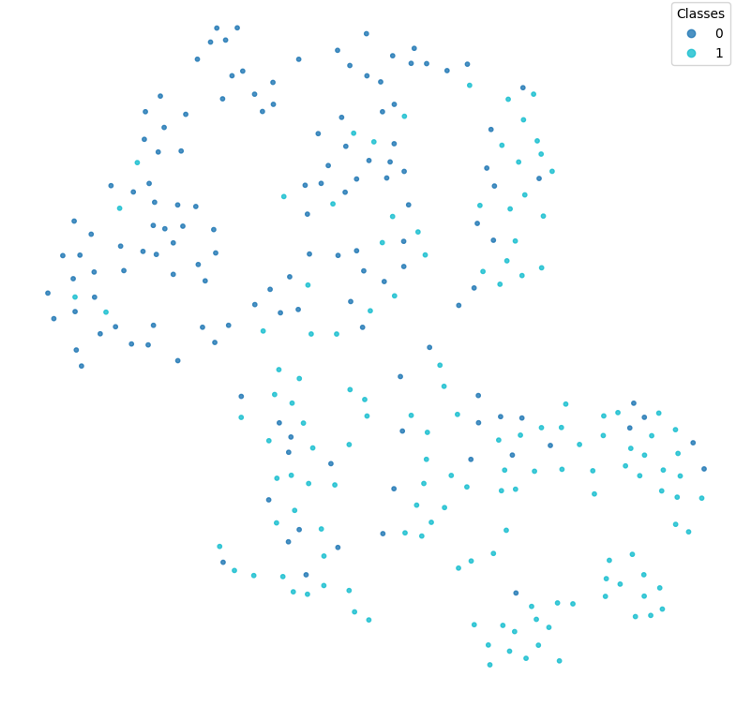}
    \caption{Senate - Raw.}
\end{subfigure} &
\begin{subfigure}[b]{0.23\textwidth}
    \includegraphics[width=\linewidth]{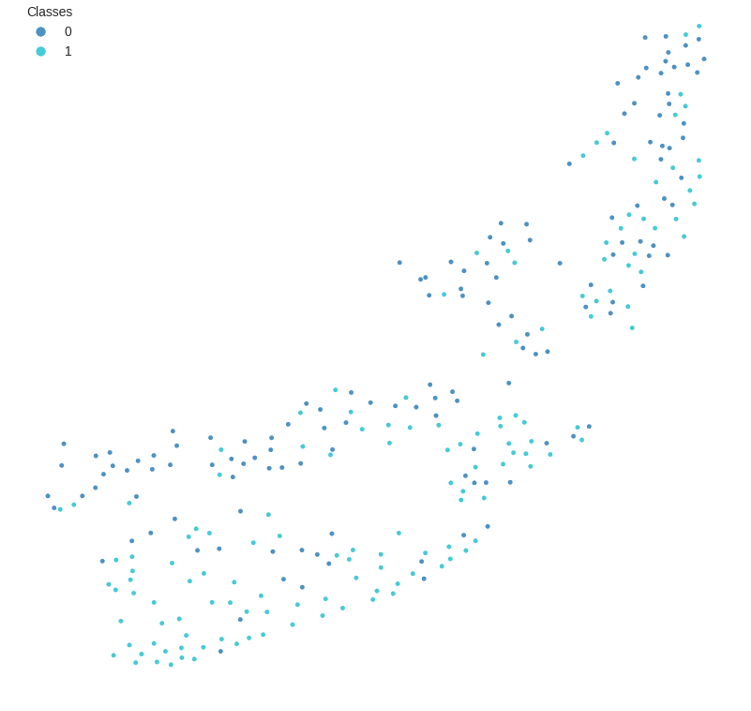}
    \caption{Senate - HyperGCN.}
\end{subfigure} &
\begin{subfigure}[b]{0.23\textwidth}
    \includegraphics[width=\linewidth]{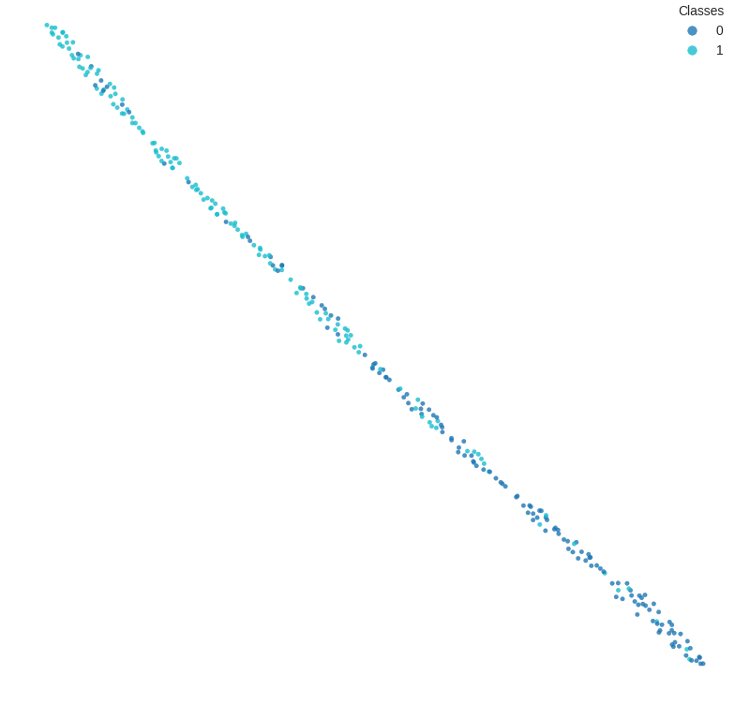}
    \caption{Senate - HNSD.}
\end{subfigure}

\end{tabular}
\caption{t-SNE visualizations of node embeddings across datasets and methods.}
\label{fig:tsne}
\end{figure*}

\subsubsection{Ablation Study}
To investigate the individual contributions of each component within our proposed HNSD framework, we conducted a systematic ablation study. Specifically, we incrementally modified or removed distinct architectural components of HNSD and assessed the resulting impacts on node classification accuracy.

Table~\ref{tab:ablation} and Fig.~\ref{fig:tsneg} summarize our findings. Removing the sheaf learning mechanism, thereby resorting to fixed, non-learnable sheaf maps, resulted in a significant decline in performance. This underscores the critical role of adaptive structural learning in effectively guiding the diffusion process. Conversely, the elimination of only the sheaf diffusion mechanism while maintaining the learned sheaf weights also caused notable performance degradation, albeit to a lesser extent.

Furthermore, we investigated the impact of two auxiliary design components: dynamic sheaf parameterization and left-side projection. The dynamic sheaf allows each layer to learn distinct sheaf parameters independently, leading to performance gains on several datasets. The left-side projection applies a linear transformation to node features prior to sheaf-based diffusion and consistently improves classification accuracy. Integrating both components yields optimal or near-optimal performance across most benchmarks.

Collectively, these results indicate that each component of the HNSD architecture contributes meaningfully to the overall performance. The combined application of all proposed enhancements yields consistently strong and accurate node classification results across a variety of hypergraph benchmarks.

\subsubsection{Feature Visualization}
To qualitatively evaluate the representational capability of HNSD, we visualize the learned embeddings using t-SNE across five different datasets, comparing them with embeddings derived from raw features and HyperGCN. As illustrated in Fig.~\ref{fig:tsne}, the embeddings generated by HNSD exhibit clearer and more distinct cluster formations compared to the baseline methods. This suggests that HNSD substantially enhances class discriminability. The advantage of HNSD is particularly evident in the Senate datasets, where traditional approaches typically encounter difficulties in distinguishing between classes due to inherent low homophily.

\section{Conclusion}
\label{sec:conclusion}
In this work, we introduced HNSD, a principled generalization of sheaf-based diffusion to hypergraphs. Our framework rigorously preserves the original hypergraph's structural information by constructing symmetric simplicial sets that encode all oriented subrelations. By leveraging adjacency defined via shared facets, the proposed normalized degree 0 sheaf Laplacian effectively captures non-trivial node interactions. We theoretically demonstrate that HNSD is a natural generalization of sheaf-based learning, as its Laplacian precisely reduces to the traditional normalized sheaf Laplacian on graphs. Empirically, HNSD achieves competitive performance across established hypergraph benchmarks, validating its efficacy as the first structure-preserving, sheaf-theoretic extension for hypergraph neural networks.

An interesting direction for future research involves extending HNSD to directed hypergraphs. This would entail utilizing a join operation to combine the symmetric simplicial sets for the tail and head of each hyperarc. For a general directed hypergraph, these unified structures could then be logically glued together. This construction would preserve the distinct adjacencies inherent to both tail and head components, enabling a new sheaf Laplacian to explicitly encode directionality for richer relational modeling.



\bibliographystyle{IEEEtran}
\bibliography{access}

\end{document}